\documentclass{article}

    \PassOptionsToPackage{numbers, compress}{natbib}



    \usepackage[final]{neurips_2024}


\usepackage[utf8]{inputenc} 
\usepackage[T1]{fontenc}    
\usepackage{hyperref}       
\usepackage{url}            
\usepackage{booktabs}       
\usepackage{amsfonts}       
\usepackage{nicefrac}       
\usepackage{microtype}      
\usepackage{xcolor}         

\usepackage{bm}
\usepackage{multirow}
\usepackage{ulem}
\usepackage{subcaption}
\usepackage{bbm}
\usepackage{graphicx}
\usepackage{tabularx}
\usepackage{booktabs}
\usepackage{wrapfig}
\usepackage{amsthm}
\usepackage[ruled,linesnumbered]{algorithm2e}

\hypersetup{
    colorlinks=true,
    linkcolor=blue,
    filecolor=blue,
    urlcolor=blue,
    citecolor=blue
}



\usepackage{amsmath}
\newtheorem{myDef}{Definition}
\newtheorem{myTheo}{Theorem}
\newtheorem{myProposition}{Proposition}

\newtheorem{myLem}{Lemma}

\title{Beyond Redundancy: Information-aware Unsupervised Multiplex Graph Structure Learning}

%

\author{%
  Zhixiang Shen\thanks{Equal contribution.}, \hspace{0.2cm} Shuo Wang\footnotemark[1], \hspace{0.2cm} Zhao Kang\thanks{Corresponding author.} \\
  School of Computer Science and Engineering,\\
  University of Electronic Science and Technology of China, Chengdu, Sichuan, China\\
  \texttt{zhixiang.zxs@gmail.com zkang@uestc.edu.cn} \\
}

\begin{document}

\maketitle

\begin{abstract}
Unsupervised Multiplex Graph Learning (UMGL) aims to learn node representations on various edge types without manual labeling. However, existing research overlooks a key factor: the reliability of the graph structure. Real-world data often exhibit a complex nature and contain abundant task-irrelevant noise, severely compromising UMGL's performance. Moreover, existing methods primarily rely on contrastive learning to maximize mutual information across different graphs, limiting them to multiplex graph redundant scenarios and failing to capture view-unique task-relevant information. In this paper, we focus on a more realistic and challenging task: to unsupervisedly learn a fused graph from multiple graphs that preserve sufficient task-relevant information while removing task-irrelevant noise. Specifically, our proposed \textbf{Info}rmation-aware Unsupervised \textbf{M}ultiplex \textbf{G}raph \textbf{F}usion framework (InfoMGF) uses graph structure refinement to eliminate irrelevant noise and simultaneously maximizes view-shared and view-unique task-relevant information, thereby tackling the frontier of non-redundant multiplex graph. Theoretical analyses further guarantee the effectiveness of InfoMGF. Comprehensive experiments against various baselines on different downstream tasks demonstrate its superior performance and robustness. Surprisingly, our unsupervised method even beats the sophisticated supervised approaches. The source code and datasets are available at \href{https://github.com/zxlearningdeep/InfoMGF}{https://github.com/zxlearningdeep/InfoMGF}.
\end{abstract}

\section{Introduction}\label{Intro}

Multiplex graph (multiple graph
layers span across a common set of nodes), as a special type of heterogeneous graph, provides richer information and better modeling capabilities, leading
to challenges in learning graph representation \cite{shen2024balanced}. Recently, unsupervised multiplex graph learning (UMGL) has attracted significant attention due to its exploitation of more detailed information from diverse sources \cite{park2020unsupervised,mo2023disentangled}, using graph neural networks (GNNs) \cite{wu2020comprehensive} and self-supervised techniques \cite{liu2021self}. UMGL has become a powerful tool in numerous real-world applications \cite{zhang2020multiplex,jiang2021pre}, e.g., social network mining and biological network analysis, where multiple relationship types exist or various interaction types occur.


Despite the significant progress made by UMGL, a substantial gap in understanding how to take advantage of the richness of the multiplex view is still left. In particular, a fundamental issue is largely overlooked: the reliability of graph structure. Typically,
the messaging-passing mechanism in GNNs assumes the reliability of the graph structure, implying that the connected nodes tend to have similar labels. All UMGL methods are graph-fixed, assuming that the original structure is sufficiently reliable for learning \cite{mo2023disentangled,jing2021hdmi,mo2023multiplex,qian2024upper}. Unfortunately, there has been evidence that practical graph structures are
not always reliable \cite{jin2020graph}. Multiplex graphs often contain substantial amounts of less informative edges characterized by irrelevant, misleading, and missing connections. For example, due to the heterophily in the graphs, GNNs generate poor performance \cite{pan2023beyond,zhu2022does,li2024pc}. Another representative example is adversarial
attacks \cite{zugner2020adversarial}, where attackers tend to add edges between nodes of different classes. Then, aggregating information from neighbors of different classes degrades UMGL performance. Diverging from existing approaches to node representation learning, we focus on structure learning of a new graph from multiplex graphs to better suit downstream tasks. Notably, existing Graph Structure Learning (GSL) overwhelmingly concentrated on a single homogeneous graph \cite{li2024gslb}, marking our endeavor as pioneering in the realm of multiplex graphs.

\begin{figure*}[t]
	\centering
	\begin{subfigure}{0.33\linewidth}
		\centering
		\includegraphics[width=1.0\linewidth]{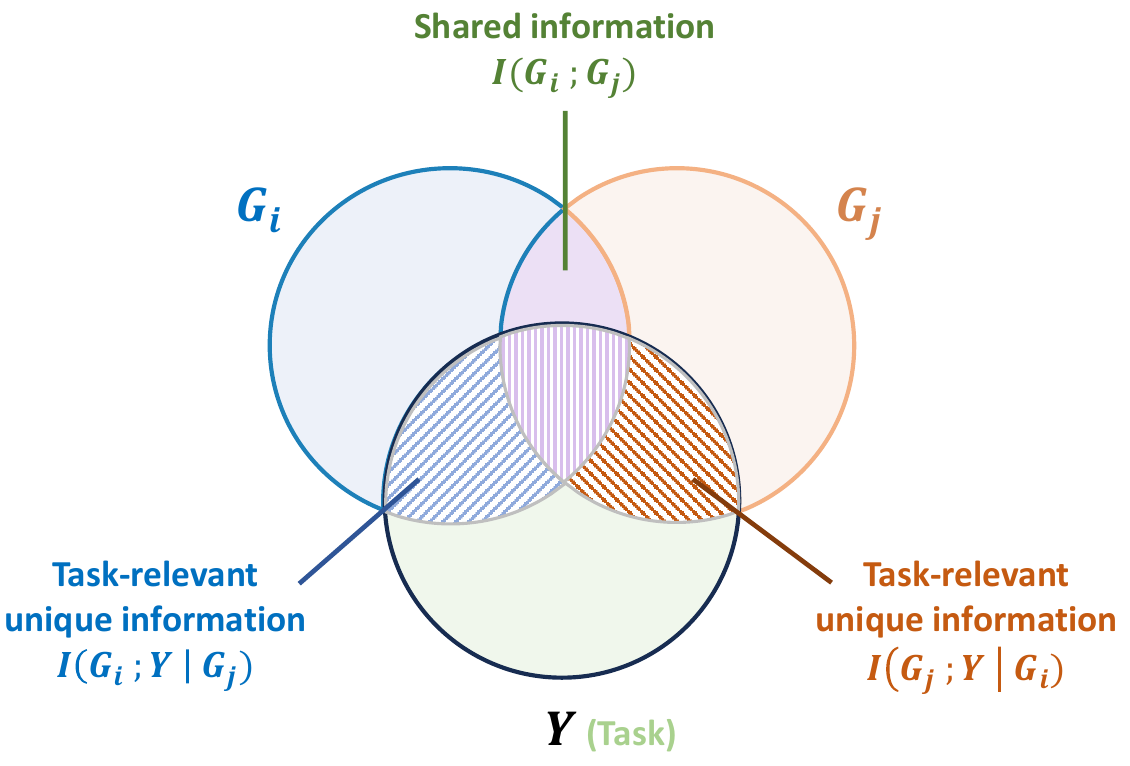}
		\caption{Multiplex graph non-redundancy}
		\label{multi_non_redundancy}
	\end{subfigure}
	\centering
	\begin{subfigure}{0.27\linewidth}
		\centering
		\includegraphics[width=1.0\linewidth]{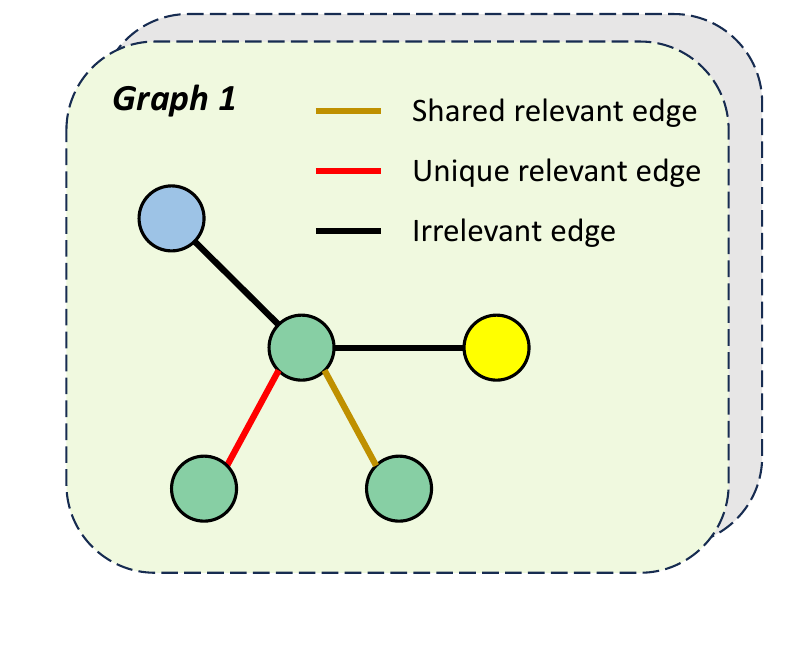}
		\caption{Non-redundancy example}
		\label{non_redundancy_example}
	\end{subfigure}
 	\centering
	\begin{subfigure}{0.28\linewidth}
		\centering
		\includegraphics[width=1.0\linewidth]{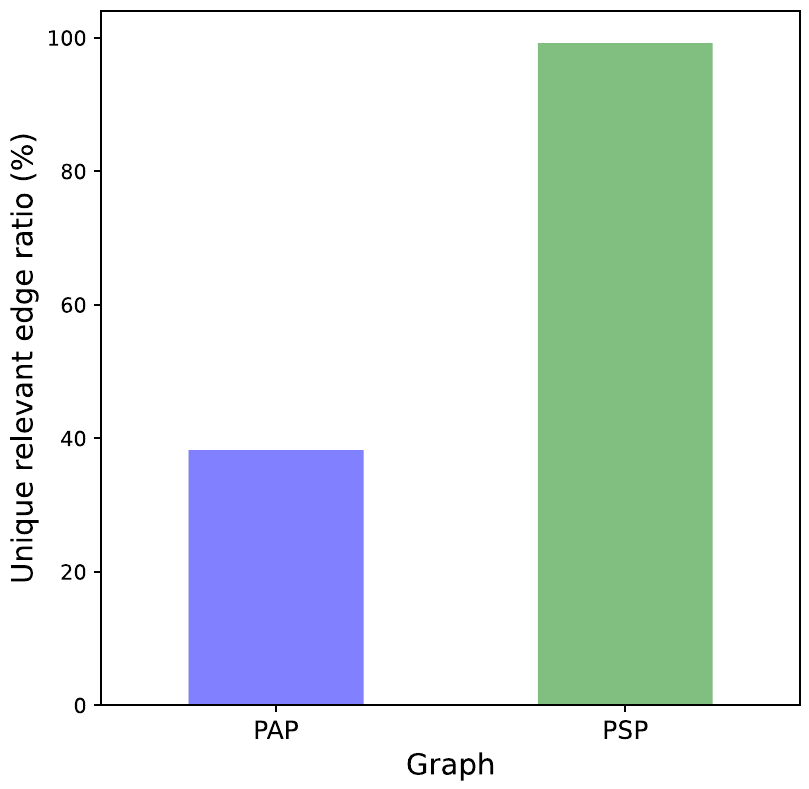}
		\caption{Empirical study on ACM}
		\label{acm_unique_ratio}
	\end{subfigure}
	\caption{(a) and (b) illustrate that in a non-redundant multiplex graph, view-specific task-relevant edges exist in certain graphs. The color of nodes represents class, edges between nodes of the same class are considered relevant edges, and "unique" indicates that the edge exists only in one graph. (c) The unique relevant edge ratio = (the number of unique relevant edges) / (the total number of relevant edges in this graph). Each graph contains a significant amount of unique task-relevant information.}
	\label{ES_MG}
\end{figure*}

Given the unsupervised nature, the majority of UMGL methods leverage contrastive learning mechanism \cite{jing2021hdmi,mo2023multiplex,qian2024upper}, a typical self-supervised technique, for effective training. However, recent research has demonstrated that standard contrastive learning, maximizing mutual information between different views, is limited to capturing view-shared task-relevant information \cite{liang2024factorized}. This approach is effective only in multi-view redundant scenarios, thereby overlooking unique task-relevant information specific to each view. 
In practice, the multiplex graph is inherently non-redundant. As illustrated in Figure \ref{ES_MG}, task-relevant information resides not only in shared areas across different graph views but also in specific view-unique regions. For instance, in the real citation network ACM \cite{yun2019graph}, certain papers on the same subject authored by different researchers may share categories and thematic relevance. This characteristic, compared to the co-author view, represents view-unique task-relevant information within the co-subject view. It exposes a critical limitation in existing UMGL methods, which potentially cannot capture sufficient task-relevant information.

Motivated by the above observations, our research goal can be summarized as follows: \textit{\textbf{how can we learn a fused graph from the original multiplex graph in an unsupervised manner, mitigating task-irrelevant noise while retaining sufficient task-relevant information?}} To handle this new task, we propose a novel Information-aware Unsupervised Multiplex Graph Fusion framework (InfoMGF). Graph structure refinement is first applied to each view to achieve a more suitable graph with less task-irrelevant noise. Confronting multiplex graph non-redundancy, InfoMGF simultaneously maximizes the view-shared and view-unique task-relevant information to realize sufficient graph learning. A learnable graph augmentation generator is also developed. Finally, InfoMGF maximizes the mutual information between the fused graph and each refined graph to encapsulate clean and holistic task-relevant information from a range of various interaction types. Theoretical analyses guarantee the effectiveness of our approach in capturing task-relevant information and graph fusion. The unsupervised learned graph and node representations can be applied to various downstream tasks. In summary, our main contributions are three-fold:

\begin{itemize}
    \item \textbf{Problem.} We pioneer the investigation of the multiplex graph reliability problem in a principled way, which is a more practical and challenging task. To our best knowledge, we are the first to attempt unsupervised graph structure learning in multiplex graphs.
    \item \textbf{Algorithm.} We propose InfoMGF, a versatile multiplex graph fusion framework that steers the fused graph learning by concurrently maximizing both view-shared and view-unique task-relevant information under the multiple graphs non-redundancy principle. Furthermore, we develop two random and generative graph augmentation strategies to capture view-unique task information. Theoretical analyses ensure the effectiveness of InfoMGF.
    \item \textbf{Evaluation.} We perform extensive experiments against various types of state-of-the-art methods on different downstream tasks to comprehensively evaluate the effectiveness and robustness of InfoMGF. Particularly,  our developed unsupervised approach even outperforms supervised methods. 
\end{itemize}

\begin{figure*}[t]
	\centering
		\includegraphics[width=0.9\linewidth]{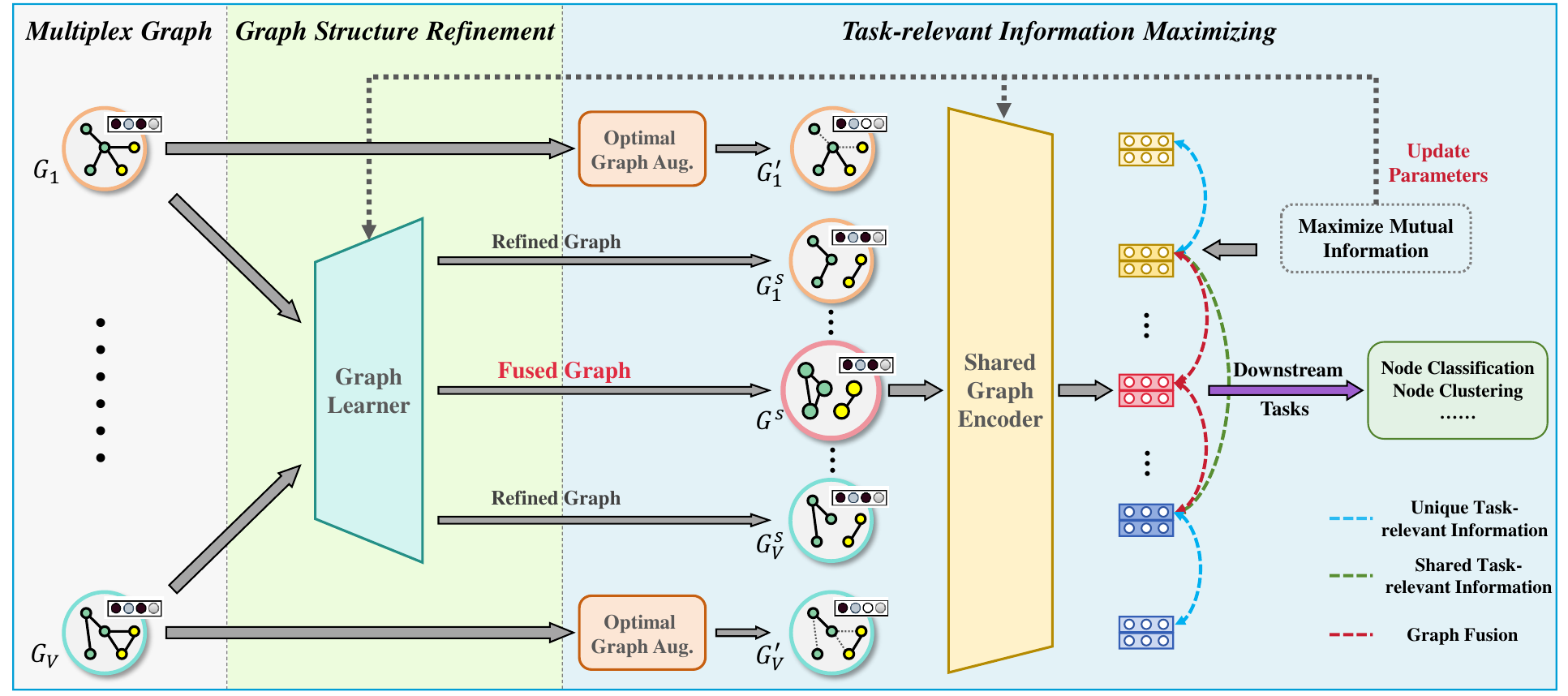}
	\caption{The overall framework of the proposed InfoMGF. Specifically, InfoMGF first generates refined graphs and the fused graph through the graph learner. Subsequently, it maximizes shared and unique task-relevant information within the multiplex graph and facilitates graph fusion. The learned fused graph and node representations are used for various downstream tasks.}
	\label{Framework}
\end{figure*}

\section{Preliminaries}

\textbf{Notation.}
The multiplex graph is represented by $G=\{G_{1},..., G_{V}\}$, where $G_{v}=\{A_{v}, X\}$ is the $v$-th graph. $A_{v}\in\{0,1\}^{N\times N}$ is the corresponding adjacency matrix and $X \in \mathbb{R}^{N \times d_{f}}$ is the shared feature matrix across all graphs. $X_i\in \mathbb{R}^{d_{f}}$ is the $i$-th row of $X$, representing the feature vector of node $i$. $N$ is the number of nodes and $D_v$ is a diagonal matrix denoting the degree matrix of $A_{v}$. $Y$ is label information. For convenience, we use ``view'' to refer to each graph in the multiplex graph.

\textbf{Multiplex graph non-redundancy.} Task-relevant information exists not only in the shared information between graphs but also potentially within the unique information of certain graphs. Following the non-redundancy principle \cite{liang2024factorized}, we provide the formal definition of Multiplex Graph Non-redundancy:

\begin{myDef}
$G_{i}$ is considered non-redundant with $G_{j}$ for $Y$ if and only if there exists $\epsilon > 0$ such that the conditional mutual information $I(G_{i}; Y \mid G_{j}) > \epsilon$ or $I(G_{j}; Y \mid G_{i}) > \epsilon$.
\end{myDef}

\textbf{Graph structure learning.} Existing GSL methods primarily focus on a single graph. Their pipeline can be summarized as a two-stage framework \cite{li2024gslb}: a Graph Learner takes in the original graph $G=\{A, X\}$ to generate a refined graph $G^s=\{A^s, X\}$ with a new structure; a Graph Encoder uses the refined graph as input to obtain node representations. Note that node features generally do not change in GSL, only the graph structure is optimized. Related work is in Appendix \ref{related_work}.

\section{Methodology}

As illustrated in Figure \ref{Framework}, our proposed InfoMGF consists of two modules: the \textit{Graph Structure Refinement module} and the \textit{Task-Relevant Information Maximization module}.

\subsection{Graph Structure Refinement}\label{GL}

We first use a graph learner to generate each view's refined graph $G_{v}^s=\{A_v^s, X\}$. To retain node features and structure information simultaneously, we apply the widely used Simple Graph Convolution (SGC) \cite{wu2019simplifying} to perform aggregation in each view, resulting in view-specific node features $X^v$. A view-specific two-layer attentive network is employed to model the varying contributions of different features to structure learning:

\begin{equation}
X^{v}=(\tilde{D}_{v}^{-\frac{1}{2}}\tilde{A}_{v}\tilde{D}_{v}^{-\frac{1}{2}} )^{r} X, \quad H^{v}=\sigma(X^{v}\odot W^{v}_{1})\odot  W^{v}_{2}
\label{learner}
\end{equation}
where $\tilde{D}_v = D_v + I$ and $\tilde{A}_v = A_v +I$. $r$ represents the order of graph aggregation. $\sigma(\cdot)$ is the non-linear activation function and $\odot$ denotes the Hadamard product. All rows of $W^{v}_{1}$ are identical, representing a learnable attention vector shared by all nodes. This strategy enables us to acquire view-specific features before training, thereby circumventing the time-consuming graph convolution operations typically required by GNN-based graph learners during training, which significantly boosts our model's scalability.

Like existing GSL methods \cite{li2024gslb,liu2022towards}, we apply post-processing techniques to ensure that the adjacency matrix $A_v^s$ satisfies properties such as sparsity, non-negativity, symmetry, and normalization. Specifically, we use $H^v$ to construct the similarity matrix and then sparsify it using $k$-nearest neighbors ($k$NN). For large-scale graphs, we utilize locality-sensitive approximation during $k$NN sparsification to reduce time complexity \cite{fatemi2021slaps}. Afterward, operations including Symmetrization, Activation, and Normalization are used sequentially to generate the final $A_v^s$. Following the refinement of each view, we employ a shared Graph Convolutional Network (GCN) \cite{kipf2016semi} as the graph encoder to obtain the node representations $Z^{v} \in \mathbb{R}^{N \times d}$ of each view, computed by $Z^{v}=\mathrm{GCN}(A_{v}^{s}, X)$.

\subsection{Maximizing Shared Task-Relevant Information}\label{STI}

$G_v^s$ should contain not only view-shared but also view-unique task-relevant information. Following standard contrastive learning \cite{tosh2021contrastive,tsai2020self}, for each pair of distinct views (e.g., $i$ and $j$), our approach seeks to maximize the mutual information $0.5I(G^s_{i}; G_{j})+0.5I(G^s_{j}; G_{i})$ to capture shared task-relevant information between views. Essentially, the maximization objective can be transformed to a tractable lower bound $I(G^s_{i}; G^s_{j})$ \cite{federici2020learning,achille2018emergence}. Considering the addition of mutual information for each pair, the loss term for minimization can be expressed as follows:
\begin{equation}
\begin{aligned}
    \mathcal{L}_s=-\frac{2}{V(V-1)}\sum_{i=1}^V\sum_{j=i+1}^VI(G^s_{i};G^s_{j})
\end{aligned}
\end{equation}

\subsection{Maximizing Unique Task-Relevant Information}\label{UTI}

Maximizing view-unique task-relevant information can be rigorously expressed as maximizing $I(G^{s}_{i}; Y|\cup_{j\neq i}G_{j})$. Then, we relax the optimization objective to the total task-relevant information within the view, $I(G^{s}_{i}; Y)$. This decision is based on the following considerations: on the one hand, deliberately excluding shared task-relevant information is unnecessary and would complicate the optimization process. On the other hand, repeated emphasis on shared task-relevant information encourages the model to focus more on it in the early training stage.

The unsupervised nature of our task dictates that we cannot directly optimize $I(G^{s}_{i}; Y)$ using label information. Some typical graph learning methods often reconstruct the graph structure to preserve the maximum amount of information from the original data \cite{kipf2016variational,fan2020one2multi,ling2023dual}. In the context of our task, this reconstruction-based optimization objective is equivalent to maximizing the mutual information with the original graph structure \cite{wang2022rethinking,li2023s}, i.e., $I(G_{i}^{s}; G_{i})$. However, such methods have significant drawbacks: they retain task-irrelevant information from the original data, and the graph reconstruction also entails high complexity. In contrast, we leverage graph augmentation to reduce task-irrelevant information and retain task-relevant information without accessing $Y$. Following the optimal augmentation assumption \cite{liang2024factorized,tian2020makes}, we define optimal graph augmentation as:

\begin{myDef}
    $G_{i}^{\prime}$ is an optimal augmented graph of $G_{i}$ if and only if $I(G_{i}^{\prime}; G_{i})=I(Y; G_{i})$, implying that the only information shared between $G_{i}$ and $G_{i}^{\prime}$ is task-relevant without task-irrelevant noise.
\end{myDef}

\begin{myTheo}\label{G_prime_Y}
    If $G_{i}^{\prime}$ is the optimal augmented graph of $G_{i}$, then $I(G_{i}^s;G_{i}^{\prime})=I(G_i^s;Y)$ holds.
\end{myTheo}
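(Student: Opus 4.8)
The plan is to reduce the statement to the chain rule for mutual information together with a data-processing argument, exploiting the fact that both $G_i^s$ and $G_i^\prime$ are obtained from $G_i$ alone. Concretely, the view-specific graph learner of Section~\ref{GL} produces $G_i^s=\{A_i^s,X\}$ as a (parameterized, possibly stochastic) function of $G_i=\{A_i,X\}$ only, and the augmented graph $G_i^\prime$ is likewise obtained by transforming $G_i$ without any access to $Y$. Hence, conditioned on $G_i$, the variable $G_i^s$ is independent of the pair $(Y,G_i^\prime)$. I would record this in the form I actually need: $G_i^s\perp G_i^\prime\mid (G_i,Y)$ and $G_i^s\perp Y\mid (G_i,G_i^\prime)$ (and $G_i^\prime\perp Y\mid G_i$).

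First I would expand $I(G_i^s;Y,G_i^\prime)$ by the chain rule in the two possible orders, giving the identity $I(G_i^s;G_i^\prime)-I(G_i^s;Y)=I(G_i^s;G_i^\prime\mid Y)-I(G_i^s;Y\mid G_i^\prime)$. So it suffices to show that both conditional mutual informations on the right vanish. For each, apply the conditional data-processing inequality along the Markov structure above: $I(G_i^s;G_i^\prime\mid Y)\le I(G_i;G_i^\prime\mid Y)$ since $G_i^\prime - G_i - G_i^s$ is Markov given $Y$, and $I(G_i^s;Y\mid G_i^\prime)\le I(G_i;Y\mid G_i^\prime)$ since $Y - G_i - G_i^s$ is Markov given $G_i^\prime$.

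It then remains to bound the two right-hand sides using Definition~2. Expanding $I(G_i;Y,G_i^\prime)$ by the chain rule in both orders yields $I(G_i;G_i^\prime\mid Y)-I(G_i;Y\mid G_i^\prime)=I(G_i;G_i^\prime)-I(G_i;Y)=0$, where the last equality is the defining property of an optimal augmented graph; hence the two quantities are equal, say both equal $c\ge 0$. The optimal-augmentation assumption further states that the information $G_i^\prime$ shares with $G_i$ is purely task-relevant, i.e.\ $I(G_i;G_i^\prime\mid Y)=0$, which forces $c=0$ and therefore $I(G_i;Y\mid G_i^\prime)=0$ as well. Combining with the previous paragraph, $0\le I(G_i^s;G_i^\prime\mid Y)\le 0$ and $0\le I(G_i^s;Y\mid G_i^\prime)\le 0$, so both conditional terms vanish and the identity collapses to $I(G_i^s;G_i^\prime)=I(G_i^s;Y)$.

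I expect the main obstacle to be at the modeling level rather than in the computation: one must pin down the correct probabilistic dependency structure — that $G_i^s$ and $G_i^\prime$ are downstream of $G_i$ and of nothing else — so that the conditional data-processing inequalities are legitimate, and one must extract from Definition~2 the two distinct facts it encodes, namely $I(G_i^\prime;G_i)=I(Y;G_i)$ (preservation of task-relevant information) and $I(G_i;G_i^\prime\mid Y)=0$ (no shared task-irrelevant noise). Once these are in place, the remainder is two applications of the chain rule plus the squeeze argument. A minor technical point to handle is that all mutual informations involved are finite (immediate in the discrete/finite graph setting of the paper), or alternatively to keep the identity in rearranged form so as to avoid any $\infty-\infty$ issue.
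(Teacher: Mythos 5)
Your proof is correct, and it takes a genuinely different route from the paper's. The paper regards $G_i^s$ as a representation of $G_i$ (Lemma \ref{repres}) and then runs a chain of \emph{equalities} through the triple (interaction) information: $I(G_i^s;G_i^{\prime})=I(G_i;G_i^s;G_i^{\prime})=I(G_i;G_i^{\prime})-I(G_i;G_i^{\prime}\mid G_i^s)=I(G_i;Y)-I(G_i;Y\mid G_i^s)=I(G_i;Y;G_i^s)=I(G_i^s;Y)$, where the middle step substitutes $Y$ for $G_i^{\prime}$ in both the unconditional and the conditional term. That substitution is not a consequence of the bare equation $I(G_i^{\prime};G_i)=I(Y;G_i)$; it tacitly uses the stronger gloss of Definition~2 that the information $G_i^{\prime}$ shares with $G_i$ is exactly the task-relevant part. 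You instead decompose $I(G_i^s;Y,G_i^{\prime})$ in two orders, reduce the claim to the vanishing of $I(G_i^s;G_i^{\prime}\mid Y)$ and $I(G_i^s;Y\mid G_i^{\prime})$, lift these to the $G_i$ level by conditional data-processing (which is licensed by the same conditional independences that Lemma \ref{repres} records), and then kill them using the two facts you explicitly extract from Definition~2: the equation $I(G_i;G_i^{\prime})=I(G_i;Y)$ and the no-shared-noise condition $I(G_i;G_i^{\prime}\mid Y)=0$. The two proofs need the same strengthened reading of the definition — the equality of mutual informations alone does not suffice (e.g.\ $G_i=(U,W)$ with independent bits, $Y=U$, $G_i^{\prime}=W$, $G_i^s=U$ satisfies the equation but violates the theorem) — but yours has the advantage of isolating that hypothesis as an explicit assumption and of using only the chain rule, nonnegativity, and conditional DPI, avoiding interaction-information terms that can be negative; the paper's argument is shorter and stays entirely within equalities, at the cost of burying exactly where the extra content of Definition~2 is consumed.
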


\begin{myTheo}\label{G_prime_G}
     The maximization of $I(G_{i}^{s}; G_{i}^{\prime})$ yields a discernible reduction in the task-irrelevant information relative to the maximization of $I(G_{i}^{s}; G_{i})$.
\end{myTheo}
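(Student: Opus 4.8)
The plan is to fix a precise notion of the \emph{task-irrelevant information carried by} $G_i^s$, namely the conditional mutual information $I(G_i^s;G_i\mid Y)$ — what the refined graph keeps about the original graph that is useless for the downstream label — and then relate the two candidate objectives $I(G_i^s;G_i)$ and $I(G_i^s;G_i')$ through a single exact decomposition. First I would note the Markov chain $Y - G_i - G_i^s$: the graph learner constructs $G_i^s$ from $G_i$ alone and never sees $Y$, so $I(G_i^s;Y\mid G_i)=0$. Expanding $I(G_i^s;G_i,Y)$ with the chain rule in both orders and using this identity gives
\begin{equation}
I(G_i^s;G_i)=I(G_i^s;Y)+I(G_i^s;G_i\mid Y).
\end{equation}

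Next I would substitute Theorem~\ref{G_prime_Y}, i.e.\ $I(G_i^s;G_i')=I(G_i^s;Y)$, to obtain
\begin{equation}
I(G_i^s;G_i)=I(G_i^s;G_i')+I(G_i^s;G_i\mid Y),
\end{equation}
so the reconstruction-style objective always exceeds the optimal-augmentation objective by exactly the task-irrelevant term. From here the argument is interpretive: maximizing $I(G_i^s;G_i)$ maximizes a sum in which $I(G_i^s;G_i\mid Y)$ enters with a positive sign, so the optimizer is actively pushed to inflate this noise component — toward its ceiling $H(G_i\mid Y)$, approached when $G_i^s$ becomes a near-lossless re-encoding of $G_i$ (to the extent the learner's hypothesis class permits). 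Maximizing $I(G_i^s;G_i')=I(G_i^s;Y)$ instead places no weight on $I(G_i^s;G_i\mid Y)$, so its maximizers can attain the same optimal task-relevant level $I(G_i^s;Y)=I(G_i;Y)$ while holding the task-irrelevant term at (or near) zero. Comparing the two solutions, the augmented objective yields a strictly smaller $I(G_i^s;G_i\mid Y)$ whenever the view genuinely contains noise, $H(G_i\mid Y)>0$ — precisely the setting that motivates the paper.

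The step I expect to be the main obstacle is pinning down the adjective \emph{discernible}: the decomposition is exact and immediate, but it only certifies that the two \emph{objective values} differ by $I(G_i^s;G_i\mid Y)$, so to conclude something about the \emph{learned} representation I need one extra ingredient — either an explicit parsimony/capacity assumption forcing the $I(G_i^s;G_i')$-maximizer to be minimal sufficient, or a reformulation at the level of optima, $\max_{G_i^s}I(G_i^s;G_i)-\max_{G_i^s}I(G_i^s;G_i')=H(G_i\mid Y)$, from which one directly reads off that the gap equals exactly the eliminated noise. I would also make the two conditional-independence assumptions explicit ($Y-G_i-G_i^s$ used here, and $G_i^s-G_i-G_i'$ inherited from Theorem~\ref{G_prime_Y}) and flag the degenerate case $H(G_i\mid Y)=0$ (a noiseless view), where the two objectives coincide and the reduction is zero, in agreement with the statement.
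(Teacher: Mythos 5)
Your proposal is correct and follows essentially the same route as the paper: both rest on the decomposition $I(G_i^s;G_i)=I(G_i^s;Y)+I(G_i^s;G_i\mid Y)$ (obtained from the chain rule together with $I(G_i^s;Y\mid G_i)=0$, since $G_i^s$ is a representation of $G_i$), and then invoke Theorem~\ref{G_prime_Y} to conclude that maximizing $I(G_i^s;G_i')=I(G_i^s;Y)$ places no weight on the task-irrelevant term, whereas maximizing $I(G_i^s;G_i)$ also inflates it. Your additional remarks on the informal word ``discernible'' and the degenerate case $H(G_i\mid Y)=0$ go slightly beyond the paper's (likewise interpretive) conclusion but do not change the argument.
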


Theorem \ref{G_prime_Y} theoretically guarantees that maximizing $I(G_{i}^{s}; G_{i}^{\prime})$ would provide clean and sufficient task-relevant guidance for learning $G^s_i$. Theorem \ref{G_prime_G} demonstrates the superiority of our optimization objective over typical methods in removing task-irrelevant information. 
Therefore, given $G_{i}^{\prime}=\{A_{i}^\prime, X^\prime\}$ for each view, where $A_{i}^\prime$ and $X^\prime$ denote the augmented adjacency matrix and node features, respectively, the loss term $\mathcal{L}_u$ is defined as:
\begin{equation}\label{cal_L_u}
    \mathcal{L}_u=-\frac{1}{V}\sum_{i=1}^VI(G_{i}^{s}; G_{i}^{\prime})
\end{equation}

The key to the above objective lies in ensuring that $G_{i}^{\prime}$ satisfies the optimal graph augmentation. However, given the absence of label information, achieving truly optimal augmentation is not feasible; instead, we can only rely on heuristic techniques to simulate it. Consistent with most existing graph augmentations, we believe that task-relevant information in graph data exists in both structure and feature, necessitating augmentation in both aspects. We use random masking, a simple yet effective method, to perform feature augmentation. For graph structure, we propose two versions: random edge dropping and learnable augmentation through a graph generator.

\textbf{Random feature masking.} For node features, we randomly select a fraction of feature dimensions and mask them with zeros. Formally, we sample a random vector $\vec{m} \in \{0,1\}^{d_f}$ where each dimension is drawn from a Bernoulli distribution independently, i.e., $\vec{m}_i\sim Bern(1-\rho)$. Then, the augmented node features $X^{\prime}$ is computed by $X^{\prime}=[X_1 \odot \vec m;X_2 \odot \vec m;...;X_N \odot \vec m]^\top$.

\textbf{Random edge dropping (InfoMGF-RA).} For a given $A_{v}$, a masking matrix $M \in \{0,1\}^{N \times N}$ is randomly generated, where each element $M_{ij}$ is sampled from a Bernoulli distribution. Afterward, the augmented adjacency matrix can be computed as $A_{v}^{\prime}=A_{v}\odot M$.

\textbf{Learnable generative augmentation (InfoMGF-LA).}\label{aug_LA}
Random edge dropping may lack reliability and interpretability. A low dropping probability might not suffice to eliminate task-irrelevant information, while excessive deletions could compromise task-relevant information. Therefore, we opt to use a learnable graph augmentation generator. To avoid interference from inappropriate structure information, we compute personalized sampling probabilities for existing edges in each view by employing a Multilayer Perceptron (MLP) in the node features. To ensure the differentiability of the sampling operation for end-to-end training, we introduce the Gumbel-Max reparametrization trick \cite{maddison2016concrete,jang2017categorical} to transform the discrete binary (0-1) distribution of edge weights into a continuous distribution. Specifically, for each edge $e_{i,j}$ in view $v$, its edge weight $\omega_{i,j}^{v}$ in the corresponding augmented view is computed as follows:
\begin{equation}
    \theta_{i,j}^{v} = \mathrm{MLP}\left([W X_i;W X_j]\right), \quad \omega_{i,j}^{v}=\mathrm{Sigmoid}\left((\log \delta - \log(1 - \delta) + \theta_{i,j}^{v})/\tau \right)
    \label{LA}
\end{equation}
where $[\cdot;\cdot]$ denotes the concatenation operation and $ \delta \sim 
\mathrm{Uniform(0, 1)}$ is the sampled Gumbel random variate. We can control the temperature hyper-parameter $\tau$ approaching $0$ to make $\omega_{i,j}^{v}$ tend towards a binary distribution. For an effective augmented graph generator, it should eliminate task-irrelevant noise while retaining task-relevant information. Therefore, we design a suitable loss function for augmented graph training:
\begin{equation}
\mathcal{L}_{gen}=\frac{1}{N V}\sum\limits_{i=1}^V \sum\limits_{j=1}^{N}\left(1 - \frac{(X_j^{i})^{\top} \hat{X}_{j}^{i}}{\Vert X_{j}^{i}\Vert\cdot \Vert \hat{X}_{j}^{i}\Vert} \right) + \lambda* \frac{1}{V}\sum_{i=1}^VI(G_{i}^{s}; G_{i}^{\prime})
\end{equation}
where $\lambda$ is a positive hyper-parameter. The first term reconstructs view-specific features using the cosine error, guaranteeing that the augmented views preserve crucial task-relevant information while having lower complexity compared to reconstructing the entire graph structure. The reconstructed features $\hat{X}^{i}$ are obtained using an MLP-based Decoder on the node representations $Z^{i^\prime}$ of the augmented view. The second term \textbf{minimizes} $I(G_{i}^{s}; G_{i}^{\prime})$ to regularize the augmented views simultaneously, ensuring that the augmented graphs would provide only task-relevant information as guidance with less task-irrelevant noise when optimizing the refined graph $G_{i}^{s}$ through Eq.(\ref{cal_L_u}). Note that for InfoMGF-LA, we adopt an iterative optimization strategy to update $G^s_i$ and $G^\prime_i$ alternatively, as described in Section \ref{MGF}.

Although previous work also employs similar generative graph augmentation \cite{suresh2021adversarial}, we still possess irreplaceable advantages in comparison. Firstly, they merely minimize mutual information to generate the augmented graph, lacking the crucial information retention component, which may jeopardize task-relevant information. Furthermore, an upper bound should ideally be used for minimization, whereas they utilize a lower bound estimator for computation, which is incorrect in optimization practice. In contrast, we use a rigorous upper bound of mutual information for the second term of $\mathcal{L}_{gen}$, which is demonstrated later.

\subsection{Multiplex Graph Fusion}\label{MGF}

The refined graph retains task-relevant information from each view while eliminating task-irrelevant noise. Afterward, we learn a fused graph that encapsulates sufficient task-relevant information from all views. Consistent with the approach in Section \ref{GL}, we leverage a scalable attention mechanism as the fused graph learner:
\begin{equation}
    H=\sigma([X;X^1;X^2;\cdots;X^V]\odot W^1)\odot W^2,  \quad   \mathcal{L}_f=-\frac{1}{V}\sum_{i=1}^V I(G^{s};G_{i}^{s})
    \label{fused}
\end{equation}
where the node features are concatenated with all view-specific features as input. The same post-processing techniques are sequentially applied to generate the fused graph $G^s=\{A^s, X\}$. The node representations $Z$ of the fused graph are also obtained through the same GCN. 
We maximize the mutual information between the fused graph and each refined graph to incorporate task-relevant information from all views, denoted as loss $\mathcal{L}_f$. The total loss $\mathcal{L}$ of our model can be expressed as the sum of three terms: $\mathcal{L}=\mathcal{L}_s+\mathcal{L}_u+\mathcal{L}_f$.

\begin{myTheo}\label{general}
    The learned fused graph $G^s$ contains more task-relevant information than the refined graph $G_{i}^s$ from any single view. Formally, we have:
    \begin{equation}
    I(G^s; Y)\geq \max_i I(G_i^s; Y) 
    \end{equation}
\end{myTheo}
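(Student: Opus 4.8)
The plan is to show that minimizing $\mathcal{L}_f$ forces the fused graph learner into a configuration in which every refined graph $G_i^s$ is recoverable from $G^s$, and then to close the argument with the chain rule (equivalently, the data-processing inequality) for mutual information. Throughout I would treat the adjacency matrices as discrete random objects in $\{0,1\}^{N\times N}$, so that all the entropies appearing are finite and ``recoverable'' becomes a clean deterministic-function statement; the shared feature matrix $X$ carries no cross-view randomness and can simply be conditioned on.

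First I would record the elementary bound $I(G^s;G_i^s)\le H(G_i^s)$, with equality exactly when $H(G_i^s\mid G^s)=0$. Since $\mathcal{L}_f=-\frac{1}{V}\sum_i I(G^s;G_i^s)$ and the fused graph learner (together with the MI estimator used in the loss) is assumed expressive enough that $G^s$ can encode the joint tuple $(G_1^s,\dots,G_V^s)$, the minimizer of $\mathcal{L}_f$ attains $I(G^s;G_i^s)=H(G_i^s)$ simultaneously for every $i$, hence $H(G_i^s\mid G^s)=0$ for all $i$. Next I would convert this into the target inequality: from $H(G_i^s\mid G^s)=0$ we get $I(G_i^s;Y\mid G^s)\le H(G_i^s\mid G^s)=0$, so $I(G_i^s;Y\mid G^s)=0$. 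Expanding $I(G^s,G_i^s;Y)$ by the chain rule in two ways,
\begin{equation}
I(G^s,G_i^s;Y)=I(G^s;Y)+I(G_i^s;Y\mid G^s)=I(G^s;Y),
\end{equation}
while also $I(G^s,G_i^s;Y)=I(G_i^s;Y)+I(G^s;Y\mid G_i^s)\ge I(G_i^s;Y)$. Combining the two lines gives $I(G^s;Y)\ge I(G_i^s;Y)$ for each $i$, and taking the maximum over $i$ yields the claim. (Equivalently, $H(G_i^s\mid G^s)=0$ says $G_i^s$ is a deterministic function of $G^s$, giving the Markov chain $Y - G^s - G_i^s$ and the data-processing inequality directly.)

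The main obstacle is the idealization in the middle step: one must argue that the minimum of $\mathcal{L}_f$ is actually attained and that all terms $I(G^s;G_i^s)$ can be simultaneously maximal. This rests on (i) sufficient capacity of the fused graph learner for $G^s$ to jointly reproduce all $G_i^s$ — the standard expressivity assumption in this line of work — and (ii) consistency of the mutual-information estimator. A secondary technical point is the discrete-versus-continuous treatment of the learned structure: post-processing (normalization, Gumbel reparametrization with temperature $\tau$) yields soft edge weights, so one either fixes a discretization or passes to the $\tau\to 0$ limit to make $H(G_i^s\mid G^s)$ well defined, while conditioning on the fixed feature matrix $X$ disposes of the remaining shared component. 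I expect the bulk of the write-up to be devoted to stating these assumptions cleanly rather than to the inequality manipulations, which are short.
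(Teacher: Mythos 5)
Your proposal is correct and follows essentially the same route as the paper: both arguments rest on the idealization that at the optimum of $\mathcal{L}_f$ the fused graph $G^s$ retains all information in each refined graph $G_i^s$, and then conclude via the chain rule and non-negativity of conditional mutual information. The only cosmetic difference is that the paper routes through the complement graph $G_c^s$ to obtain the exact decomposition $I(G^s;Y)=I(G_i^s;Y)+I(G_c^s;Y\mid G_i^s)$, whereas you work directly with the pair $(G^s,G_i^s)$ via $H(G_i^s\mid G^s)=0$ and the data-processing inequality, which needs only the one-directional version of the same assumption.
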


Theorem \ref{general} theoretically proves that the fused graph $G^s$ can incorporate more task-relevant information than considering each view individually, thus ensuring the effectiveness of multiplex graph fusion. 

\textbf{Optimization.} Note that all the loss terms require calculating mutual information. However, directly computing mutual information between two graphs is impractical due to the complexity of graph-structured data. Since we focus on node-level tasks, we assume the optimized graph should guarantee that each node's neighborhood substructure contains sufficient task-relevant information. Therefore, this requirement can be transferred into mutual information between node representations \cite{liu2024towards}, which can be easily computed using a sample-based differentiable lower/upper bound. For any view $i$ and $j$, the lower bound $I_{lb}$ and upper bound $I_{ub}$ of the mutual information $I( Z^{i}; Z^{j})$ are \cite{liang2024factorized}: 
\begin{equation}\label{lower_bound_equ}
  I_{lb}(Z^{i};Z^{j})= \mathbb{E}_{\substack{{z^{i},z^{j+}}\sim p(z^{i},z^{j}) \\ z^{j}  \sim p(z^{j})} }\left[log\frac{exp f(z^{i},z^{j+})}{ {\textstyle \sum_{N}exp f(z^{i},z^{j})} } \right]
\end{equation}
\begin{equation}\label{upper_bound_equ}
    I_{ub}(Z^{i};Z^{j})=\mathbb{E}_{{z^{i},z^{j+}}\sim p(z^{i},z^{j})}\left[f^*(z^{i},z^{j+})\right ]-\mathbb{E}_{\substack{{z^{i}\sim p(z^{i})} \\ {z^{j}\sim p(z^{j})}}} \left[ f^*(z^{i},z^{j})\right]
\end{equation}
where $f(\cdot,\cdot)$ is a score critic approximated by a neural network and $f^*(\cdot,\cdot)$ is the optimal critic from $I_{lb}$ plugged into the $I_{ub}$ objective. $p(z^{i},z^{j})$ denotes the joint distribution of node representations from views $i$ and $j$, while $p(z^{i})$ denotes the marginal distribution. $z^i$ and $z^{j+}$ are mutually positive samples, representing the representations of the same node in views $i$ and $j$ respectively.

To avoid too many extra parameters, the function $f(z^{i},z^{j})$ is implemented using non-linear projection and cosine similarity. Each term in the total loss $\mathcal{L}$ maximizes mutual information, so we use the lower bound estimator for the calculation. In contrast, we use the upper bound estimator for the generator loss $\mathcal{L}_{gen}$ in InfoMGF-LA, which minimizes mutual information. These two losses can be expressed as follows:
\begin{equation}
\mathcal{L}=-\frac{2}{V(V-1)}\sum_{i=1}^V\sum_{j=i+1}^VI_{lb}(Z^{i};Z^{j}) -\frac{1}{V}\sum_{i=1}^VI_{lb}(Z^{i}; Z^{i^\prime}) -\frac{1}{V}\sum_{i=1}^V I_{lb}(Z;Z^{i})
\label{sumloss}
\end{equation}
\begin{equation}
\mathcal{L}_{gen}=\frac{1}{N V}\sum\limits_{i=1}^V \sum\limits_{j=1}^{N}\left(1 - \frac{(X_j^{i})^{\top} \hat{X}_{j}^{i}}{\Vert X_{j}^{i}\Vert\cdot \Vert \hat{X}_{j}^{i}\Vert} \right) + \lambda* \frac{1}{V}\sum_{i=1}^VI_{ub}(Z^{i}; Z^{i^\prime})
\label{genloss}
\end{equation}

Finally, we provide the InfoMGF-LA algorithm in Appendix \ref{appendix_Algorithm}.
In Step 1 of each epoch, we keep the augmented graph fixed and optimize both the refined graphs and the fused graph using the total loss $\mathcal{L}$, updating the parameters of Graph Learners and GCN. In Step 2, we keep the refined graphs fixed and optimize each augmented graph using $\mathcal{L}_{gen}$, updating the parameters of the Augmented Graph Generator and Decoder. After training, $G^s$ and $Z$ are used for downstream tasks.

\section{Experiments}\label{experiments}

 In this section, our aim is to answer three research questions: \textbf{RQ1: }How effective is InfoMGF for different downstream tasks in unsupervised settings? \textbf{RQ2: }Does InfoMGF outperform baselines of various types under different adversarial attacks? \textbf{RQ3: }How do the main modules influence the performance of InfoMGF? 

\subsection{Experimental Setups}\label{dataseeets}

\textbf{Downstream tasks.} We evaluate the learned graph on node clustering and node classification tasks. For node clustering, following \cite{jing2021hdmi}, we apply the K-means algorithm on the node representations $Z$ of $G^s$ and use the following four metrics: Accuracy (ACC), Normalized Mutual Information (NMI), F1 Score (F1), and Adjusted Rand Index (ARI). For node classification, following the graph structure learning settings in \cite{li2024gslb}, we train a new GCN on $G^s$ for evaluation and use the following two metrics: Macro-F1 and Micro-F1.

\textbf{Datasets.} We conduct experiments on four real-world benchmark multiplex graph datasets, which consist of two citation networks (i.e., ACM \cite{yun2019graph} and DBLP \cite{yun2019graph}), one review network Yelp \cite{lu2019relation} and a large-scale citation network MAG \cite{wang2020microsoft}. Details of datasets are shown in Appendix \ref{appendixdatasets}.

\textbf{Baselines.} For node clustering, we compare InfoMGF with two single-graph methods (i.e., VGAE \cite{kipf2016variational} and DGI \cite{velivckovic2018deep}) and seven multiplex graph methods (i.e., O2MAC \cite{fan2020one2multi}, MvAGC \cite{lin2023multi}, MCGC \cite{pan2021multi}, HDMI \cite{jing2021hdmi}, MGDCR \cite{mo2023multiplex}, DMG \cite{mo2023disentangled}, and BTGF \cite{qian2024upper}). All the baselines are unsupervised clustering methods. For a fair comparison, we conduct single-graph methods separately for each graph and present the best results.

For node classification, we compare InfoMGF with baselines of various types: three supervised structure-fixed GNNs (i.e., GCN \cite{kipf2016semi}, GAT \cite{qu2017attention} and HAN \cite{wang2019heterogeneous}), six supervised GSL methods (i.e., LDS \cite{franceschi2019learning}, GRCN \cite{yu2021graph}, IDGL \cite{chen2020iterative}, ProGNN \cite{jin2020graph}, GEN \cite{wang2021graph} and NodeFormer \cite{wu2022nodeformer}), three unsupervised GSL methods (i.e., SUBLIME \cite{liu2022towards}, STABLE \cite{li2022reliable} and GSR \cite{zhao2023self}), and three structure-fixed UMGL methods (i.e., HDMI \cite{jing2021hdmi}, DMG \cite{mo2023disentangled} and BTGF \cite{qian2024upper}). GCN, GAT, and all GSL methods are single-graph approaches. For unsupervised GSL methods, following \cite{liu2022towards}, we train a new GCN on the learned graph for node classification. For UMGL methods, following \cite{jing2021hdmi}, we train a linear classifier on the learned representations. Implementation details can be found in Appendix \ref{htp_setting}.

\subsection{Effectiveness Analysis (RQ1)}

Table \ref{clustering} presents the results of node clustering. Firstly, multiplex graph clustering methods outperform single graph methods overall, demonstrating the advantages of leveraging information from multiple sources. Secondly, compared to other multiplex graph methods, both versions of our approach surpass existing state-of-the-art methods. This underscores the efficacy of our proposed graph structure learning, which eliminates task-irrelevant noise and extracts task-relevant information from all graphs, to serve downstream tasks better. Finally, InfoMGF-LA achieves notably superior results, owing to the exceptional capability of the learnable generative graph augmentation in capturing view-unique task-relevant information.

\begin{table*}[t]
  \centering
    \caption{Quantitative results ($\%$) on node clustering. The top 3 highest results are highlighted with \textcolor{red!80!black}{\textbf{red boldface}}, \textcolor{red}{red color} and \textbf{boldface}, respectively. The symbol ``OOM'' means out of memory.}
\resizebox{1.0\linewidth}{!}
{
    \begin{tabular}{c|cccc|cccc|cccc|cccc}
    \toprule
\multirow{2}{*}{Method} & \multicolumn{4}{c|}{ACM}                                                                              & \multicolumn{4}{c|}{DBLP}                                                                             & \multicolumn{4}{c|}{Yelp}                                                                             & \multicolumn{4}{c}{MAG}                                                                                \\
                        & \multicolumn{1}{c}{NMI} & \multicolumn{1}{c}{ARI} & \multicolumn{1}{c}{ACC} & \multicolumn{1}{c|}{F1} & \multicolumn{1}{c}{NMI} & \multicolumn{1}{c}{ARI} & \multicolumn{1}{c}{ACC} & \multicolumn{1}{c|}{F1} & \multicolumn{1}{c}{NMI} & \multicolumn{1}{c}{ARI} & \multicolumn{1}{c}{ACC} & \multicolumn{1}{c|}{F1} & NMI                       & \multicolumn{1}{c}{ARI} & \multicolumn{1}{c}{ACC} & \multicolumn{1}{c}{F1} \\
                        \midrule
VGAE   & 45.83 & 41.36 & 67.93 & 68.62 & 61.79 & 65.56 & 84.48   & 83.67     & 39.19 & 42.57   & 65.07  & 56.74   & \multicolumn{4}{c}{OOM}    \\
DGI  & 52.94  & 47.55  & 65.36  & 57.34  & 65.59   & 70.35 & 86.88   & 86.02   & 39.42    & 42.62    & 65.29  & 56.79   & \multicolumn{1}{l}{{\textbf{53.56}}} & {\textbf{42.6}}  & {\textbf{59.89}}    & {\textbf{57.17}}  \\
\midrule
O2MAC & 42.36 & 46.04 & 77.92 & 78.01 & 58.64 & 60.01 & 83.29 & 82.88 & 39.02 & 42.53 & 65.07 & 56.74  & \multicolumn{4}{c}{OOM}   \\
MvAGC & 64.49 & 66.81 & 87.17 & 87.21 & 50.39 & 51.21 & 78.39 & 77.84 & 24.39 & 29.25 & 63.14 & 56.7  & \multicolumn{4}{c}{OOM}  \\
MCGC & 60.21 & 50.72 & 65.62 & 54.78 & 65.56 & 71.51 & 87.96 & 87.47 & 38.35 & 35.17 & 65.61 & 57.49 & \multicolumn{4}{c}{OOM}   \\
HDMI    & 65.44 & 68.87 & 88.11 & 88.14  & 64.85 & 70.85 & 87.39 & 86.75   & 60.81 & 59.35 & 79.56  & 77.6   & \multicolumn{1}{l}{48.15} & 34.92   & 51.78    & 49.8 \\
MGDCR  & 58.8  & 55.15 & 73.82  & 70.34  & 62.47  & 62.22 & 81.91  & 80.16   & 44.23  & 46.47   & 72.71   & 54.43  & \multicolumn{1}{l}{\textcolor{red}{54.43}} & \textcolor{red}{43.98}   & \textcolor{red}{61.37}   & \textcolor{red}{60.53}  \\
DMG      & 64.14   & 67.21  & 87.11   & 87.23   & {\textbf{69.03}}   & {\textbf{73.07}}     & {\textbf{88.45}}   & {\textbf{87.88}}   & 65.66   & 66.33    & 88.26     & 89.27   & \multicolumn{1}{l}{48.72} & 39.77  & 61.61  & 60.16                  \\
BTGF   & {\textbf{68.92}}  & {\textbf{73.14}} & {\textbf{90.09}} & {\textbf{90.11}} & 66.28 & 72.47  & 88.05  & 87.28  & {\textbf{69.97}} & {\textbf{73.53}} & {\textbf{91.39}} & {\textbf{92.32}}  & \multicolumn{4}{c}{OOM}  \\
\midrule
InfoMGF-RA   & \textcolor{red}{74.89}   & \textcolor{red}{81.09}   & \textcolor{red}{92.82}  & \textcolor{red}{92.89} & \textcolor{red}{70.19}   & \textcolor{red}{73.49}    & \textcolor{red}{88.72}     & \textcolor{red}{88.31}   & \textcolor{red}{72.67}     & \textcolor{red}{74.66}   & \textcolor{red}{91.85}   & \textcolor{red}{92.86}    & \textcolor{red!80!black}{\textbf{56.65}} & \textcolor{red!80!black}{\textbf{45.25}} & \textcolor{red!80!black}{\textbf{64.13}} & \textcolor{red!80!black}{\textbf{63.09}}  \\
    InfoMGF-LA    & \textcolor{red!80!black}{\textbf{76.53}} & \textcolor{red!80!black}{\textbf{81.49}}  & \textcolor{red!80!black}{\textbf{93.45}}    & \textcolor{red!80!black}{\textbf{93.42}}   & \textcolor{red!80!black}{\textbf{73.22}}   & \textcolor{red!80!black}{\textbf{78.49}}   & \textcolor{red!80!black}{\textbf{91.08}}   & \textcolor{red!80!black}{\textbf{90.69}}    & \textcolor{red!80!black}{\textbf{75.18}}   & \textcolor{red!80!black}{\textbf{78.91}}   & \textcolor{red!80!black}{\textbf{93.26}}   & \textcolor{red!80!black}{\textbf{94.01}}     & \multicolumn{4}{c}{OOM}    \\      
\bottomrule
\end{tabular}%
}
\label{clustering}
\end{table*}%

\begin{table*}[t]
  \centering
    \caption{Quantitative results with standard deviation ($\%\pm\sigma$) on node classification. Available data for GSL during training is shown in the first column, supervised methods depend on Y for GSL. The symbol ``-'' indicates that the method is structure-fixed, which does not learn a new structure.}
\resizebox{1.0\linewidth}{!}
{
\begin{tabular}{c|c|cc|cc|cc|cc}
    \toprule
           Available & Methods & \multicolumn{2}{c|}{ACM} & \multicolumn{2}{c|}{DBLP} & \multicolumn{2}{c|}{Yelp} & \multicolumn{2}{c}{MAG} \\
           
           Data for GSL &   & Macro-F1   & Micro-F1   & Macro-F1    & Micro-F1   & Macro-F1    & Micro-F1   & Macro-F1   & Micro-F1   \\
           \midrule
- & GCN        & 90.27$\pm$0.59 & 90.18$\pm$0.61 & 90.01$\pm$0.32  & 90.99$\pm$0.28 & 78.01$\pm$1.89  & 81.03$\pm$1.81 & 75.98$\pm$0.07          & 75.76$\pm$0.10          \\
- & GAT        & 91.52$\pm$0.62 & 91.46$\pm$0.62 & 90.22$\pm$0.37  & 91.13$\pm$0.40 & 82.12$\pm$1.47  & 84.43$\pm$1.56 & \multicolumn{2}{c}{OOM}          \\
- & HAN        & 91.67$\pm$0.39 & 91.47$\pm$0.22 & 90.53$\pm$0.24  & 91.47$\pm$0.22 & 88.49$\pm$1.73  & 88.78$\pm$1.40 & \multicolumn{2}{c}{OOM}          \\
\midrule
X,Y,A & LDS        & 92.35$\pm$0.43 & 92.05$\pm$0.26 & 88.11$\pm$0.86  & 88.74$\pm$0.85 & 75.98$\pm$2.35  & 78.14$\pm$1.98 & \multicolumn{2}{c}{OOM}          \\
X,Y,A & GRCN       & {\textbf{93.04$\pm$0.17}} & {\textbf{92.94$\pm$0.18}} & 88.33$\pm$0.47  & 89.43$\pm$0.44 & 76.05$\pm$1.05  & 80.68$\pm$0.96 & \multicolumn{2}{c}{OOM}          \\
X,Y,A & IDGL       & 91.69$\pm$1.24 & 91.63$\pm$1.24 & 89.65$\pm$0.60  & 90.61$\pm$0.56 & 76.98$\pm$5.78  & 79.15$\pm$5.06 & \multicolumn{2}{c}{OOM}          \\
X,Y,A & ProGNN    & 90.57$\pm$1.03 & 90.50$\pm$1.29 & 83.13$\pm$1.56  & 84.83$\pm$1.36 & 51.76$\pm$1.46  & 58.39$\pm$1.25 & \multicolumn{2}{c}{OOM}          \\
X,Y,A & GEN       & 87.91$\pm$2.78 & 87.88$\pm$2.61 & 89.74$\pm$0.69  & 90.65$\pm$0.71 & 80.43$\pm$3.78  & 82.68$\pm$2.84 & \multicolumn{2}{c}{OOM}          \\
X,Y,A & NodeFormer & 91.33$\pm$0.77 & 90.60$\pm$0.95 & 79.54$\pm$0.78  & 80.56$\pm$0.62 & 91.69$\pm$0.65  & 90.59$\pm$1.21 & \textcolor{red}{77.21$\pm$0.18}          & \textcolor{red}{77.08$\pm$0.19}         \\
\midrule
X,A & SUBLIME    & 92.42$\pm$0.16 & 92.13$\pm$0.37 & {\textbf{90.98$\pm$0.37}}  & {\textbf{91.82$\pm$0.27}} & 79.68$\pm$0.79  & 82.99$\pm$0.82 & 75.96$\pm$0.05          & 75.71$\pm$0.03          \\
X,A & STABLE     & 83.54$\pm$4.20 & 83.38$\pm$4.51 & 75.18$\pm$1.95  & 76.42$\pm$1.95 & 71.48$\pm$4.71  & 76.62$\pm$2.75 & \multicolumn{2}{c}{OOM}          \\
X,A & GSR        & 92.14$\pm$1.08 & 92.11$\pm$0.99 & 76.59$\pm$0.45  & 77.69$\pm$0.42 & 83.85$\pm$0.76  & 85.73$\pm$0.54 & \multicolumn{2}{c}{OOM}          \\
\midrule
- & HDMI       & 91.01$\pm$0.32 & 90.86$\pm$0.31 & 89.91$\pm$0.49  & 90.89$\pm$0.51 & 80.73$\pm$0.64  & 84.05$\pm$0.91 & 72.22$\pm$0.14 & 71.84$\pm$0.15 \\
- & DMG        & 90.42$\pm$0.36 & 90.31$\pm$0.35 & 90.42$\pm$0.57  & 91.34$\pm$0.49 & 91.61$\pm$0.62  & 90.24$\pm$0.81 & {\textbf{76.34$\pm$0.09}} & {\textbf{76.13$\pm$0.10}} \\
- & BTGF       & 91.75$\pm$0.11 & 91.62$\pm$0.11 & 90.71$\pm$0.24  & 91.57$\pm$0.21 & {\textbf{92.81$\pm$1.12}}  & {\textbf{91.37$\pm$1.28}} & \multicolumn{2}{c}{OOM} \\
\midrule
X,A & InfoMGF-RA    & \textcolor{red}{93.21$\pm$0.22} & \textcolor{red}{93.14$\pm$0.21} & \textcolor{red}{90.99$\pm$0.36}  & \textcolor{red}{91.93$\pm$0.29} & \textcolor{red}{93.09$\pm$0.27}  & \textcolor{red}{92.02$\pm$0.34} & \textcolor{red!80!black}{\textbf{77.25$\pm$0.06}}          & \textcolor{red!80!black}{\textbf{77.11$\pm$0.06}}          \\
X,A & InfoMGF-LA    & \textcolor{red!80!black}{\textbf{93.42$\pm$0.21}} & \textcolor{red!80!black}{\textbf{93.35$\pm$0.21}} & \textcolor{red!80!black}{\textbf{91.28$\pm$0.31}}  & \textcolor{red!80!black}{\textbf{92.12$\pm$0.28}} & \textcolor{red!80!black}{\textbf{93.26$\pm$0.26}}  & \textcolor{red!80!black}{\textbf{92.24$\pm$0.34}} & \multicolumn{2}{c}{OOM}       \\   
\bottomrule
\end{tabular}%
}
\label{classification}
\end{table*}

\begin{figure*}[]
	\centering
	\begin{subfigure}{0.24\linewidth}
		\centering
		\includegraphics[width=1.\linewidth]{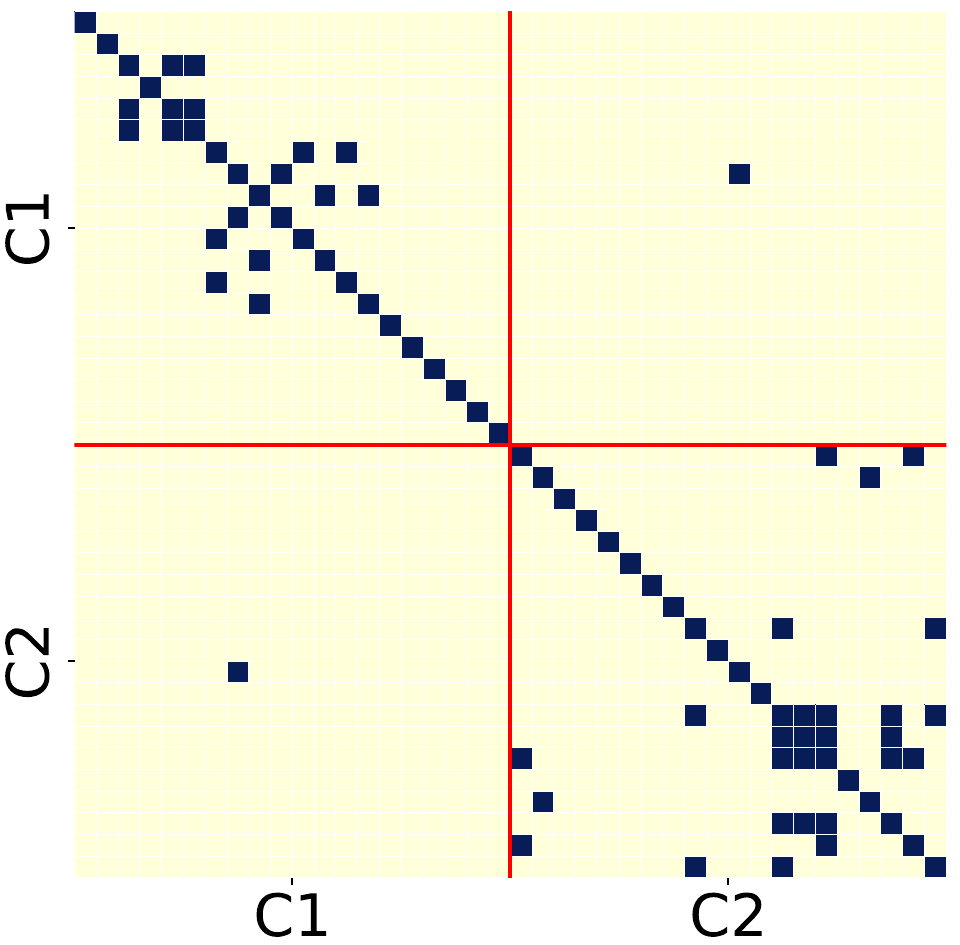}
		\caption{PAP}
	\end{subfigure}
     \hspace{0.05\linewidth}
	\centering
	\begin{subfigure}{0.24\linewidth}
		\centering
		\includegraphics[width=1.\linewidth]{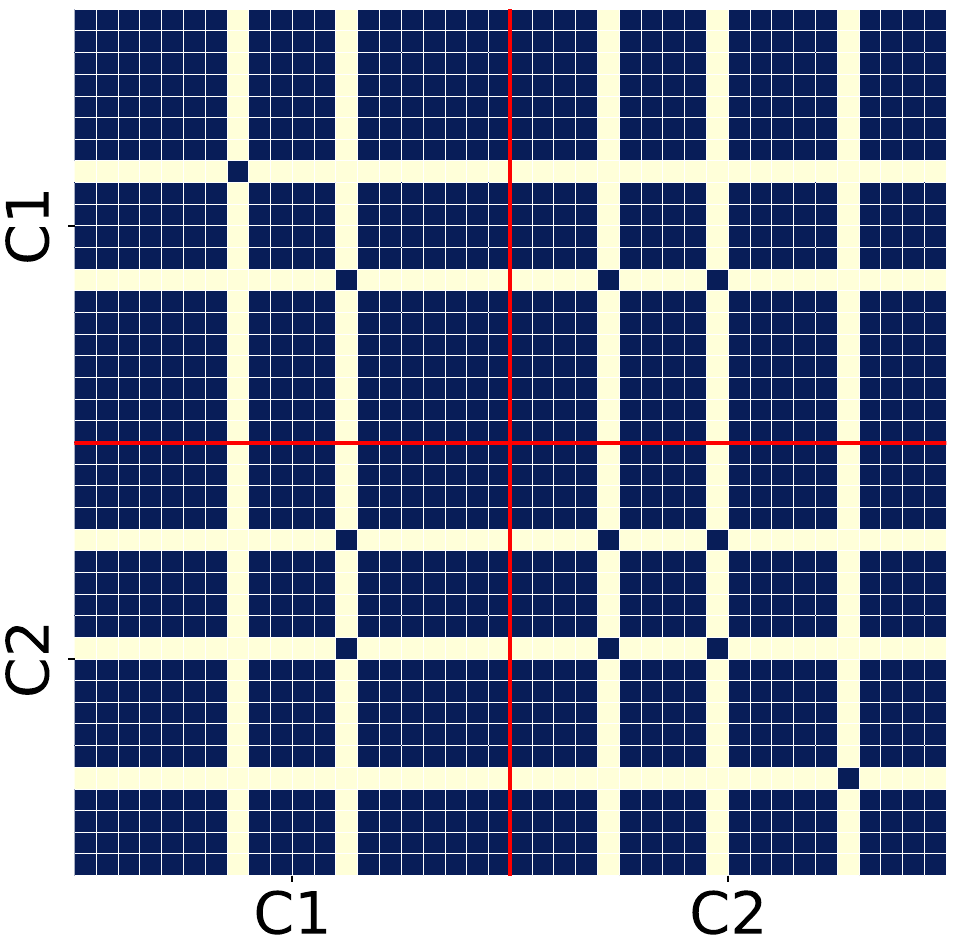}
		\caption{PSP}
	\end{subfigure}
     \hspace{0.05\linewidth} 
 	\centering
	\begin{subfigure}{0.24\linewidth}
		\centering
		\includegraphics[width=1.\linewidth]{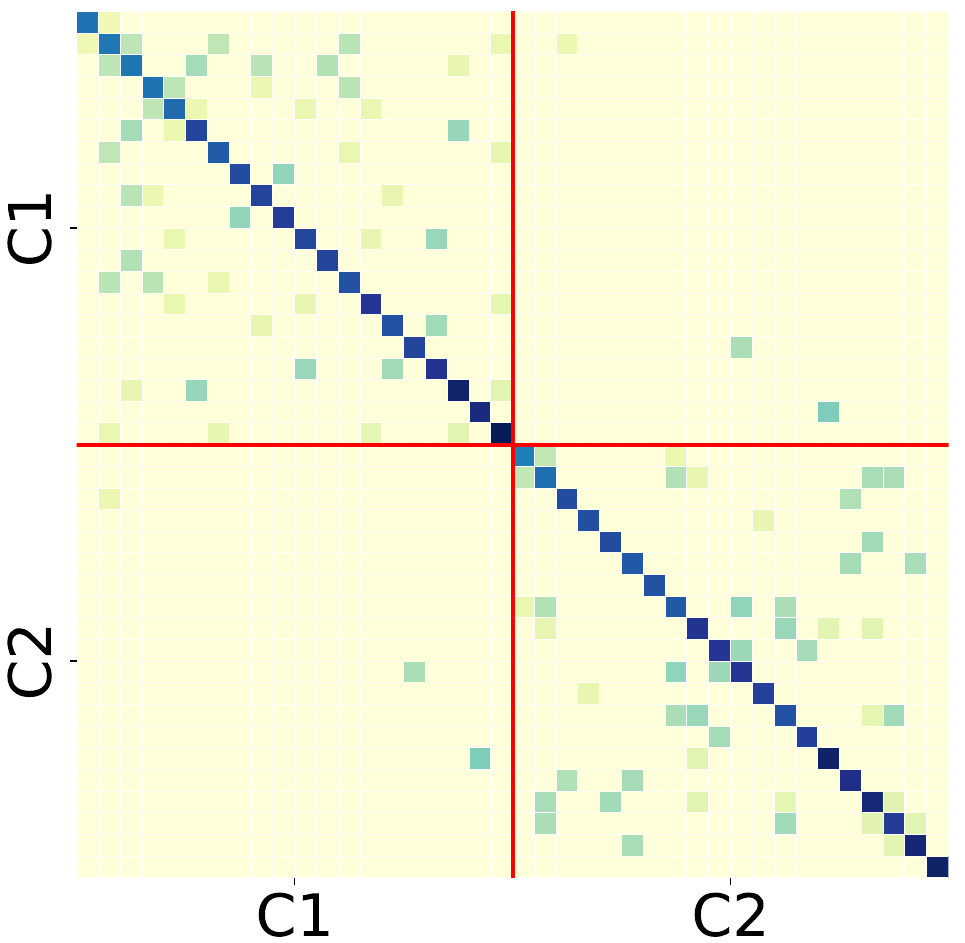}
		\caption{$G^s$}
	\end{subfigure}
	\caption{Heatmaps of the subgraph adjacency matrices of the original and learned graphs on ACM.}
	\label{Experiment_vis_acm}
\end{figure*}

Table \ref{classification} reports the node classification results. Overall, GSL methods outperform structure-fixed methods, demonstrating the unreliability of the original structure in real-world data and the significance of graph structure learning. Particularly for various carefully designed UMGL methods, the original graphs with rich task-irrelevant noise severely limit their performance. Compared to existing single-graph GSL methods, both versions of InfoMGF outperform the supervised methods. By capturing shared and unique information from multiplex graphs, InfoMGF can integrate more comprehensive task-relevant information. Finally, we can observe that the proposed InfoMGF-LA with learnable augmentation indeed surpasses the random augmentation version, once again highlighting its advantage in exploring task-relevant information.

We select a subgraph from the ACM dataset with nodes in two classes (database (C1) and data mining (C2)) and visualize the edge weights in the original multiplex graphs and the fused graph learned by InfoMGF-LA. From Figure \ref{Experiment_vis_acm}, the learned graph mainly consists of intra-class edges. Compared to the nearly fully connected PSP view, InfoMGF significantly reduces inter-class edges, reflecting our effective removal of task-irrelevant noise. Compared to the PAP view, InfoMGF introduces more intra-class edges, benefiting from capturing shared and unique task-relevant information from all graphs. Furthermore, varying edge weights in $G^s$ represent different importance levels, better serving downstream tasks.
In summary, the above experiment results across various downstream tasks demonstrate the effectiveness of InfoMGF. We use the InfoMGF-LA version in the subsequent sections to conduct more comprehensive analyses.

\begin{wrapfigure}{r}{0.49\linewidth}
    \centering
    \begin{minipage}{\linewidth} 
        \centering
        \begin{subfigure}{0.45\linewidth}
            \centering
            \includegraphics[width=\linewidth]{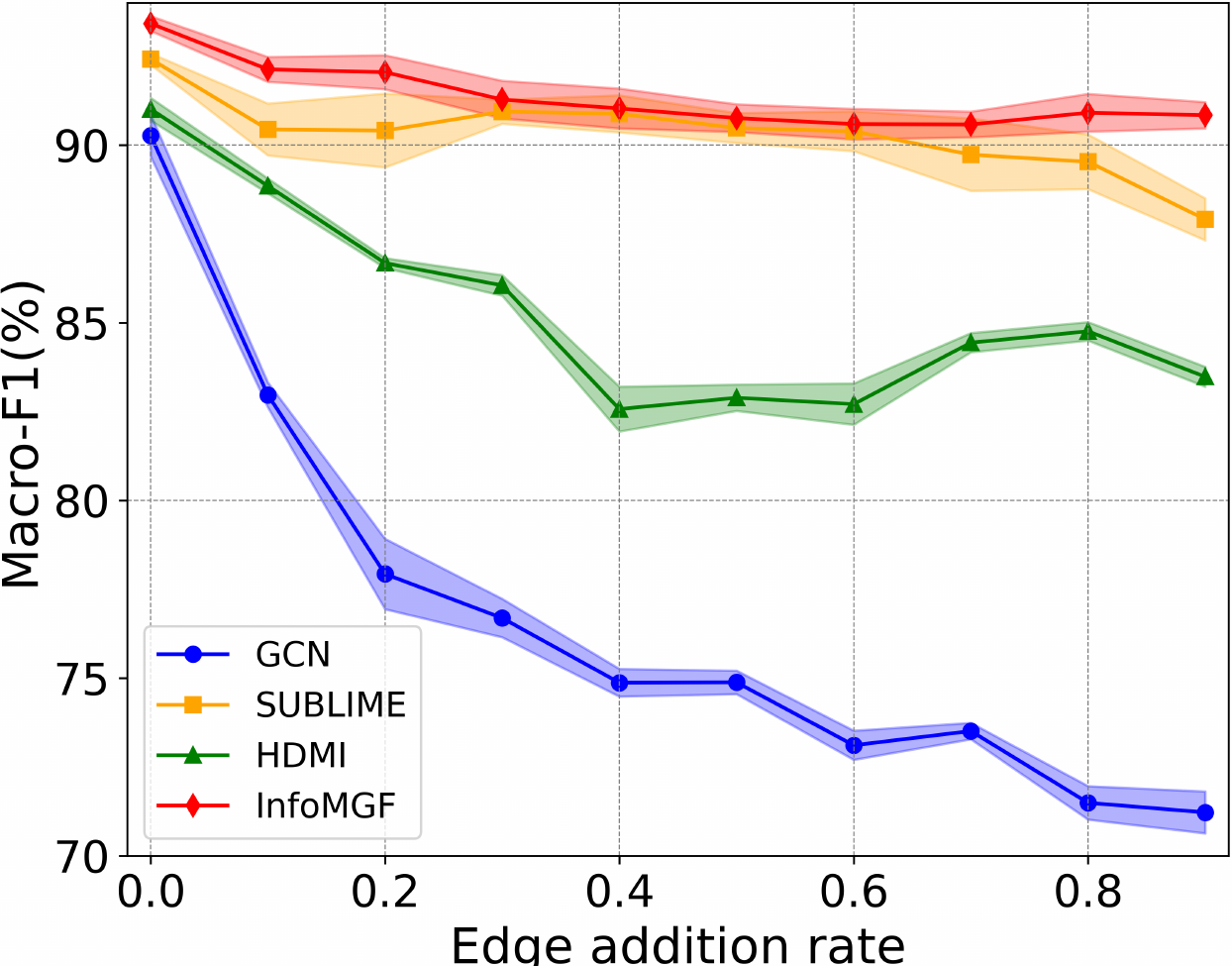}
            \caption{Adding edges}
        \end{subfigure}
        \hfill 
        \begin{subfigure}{0.45\linewidth} 
            \centering
            \includegraphics[width=\linewidth]{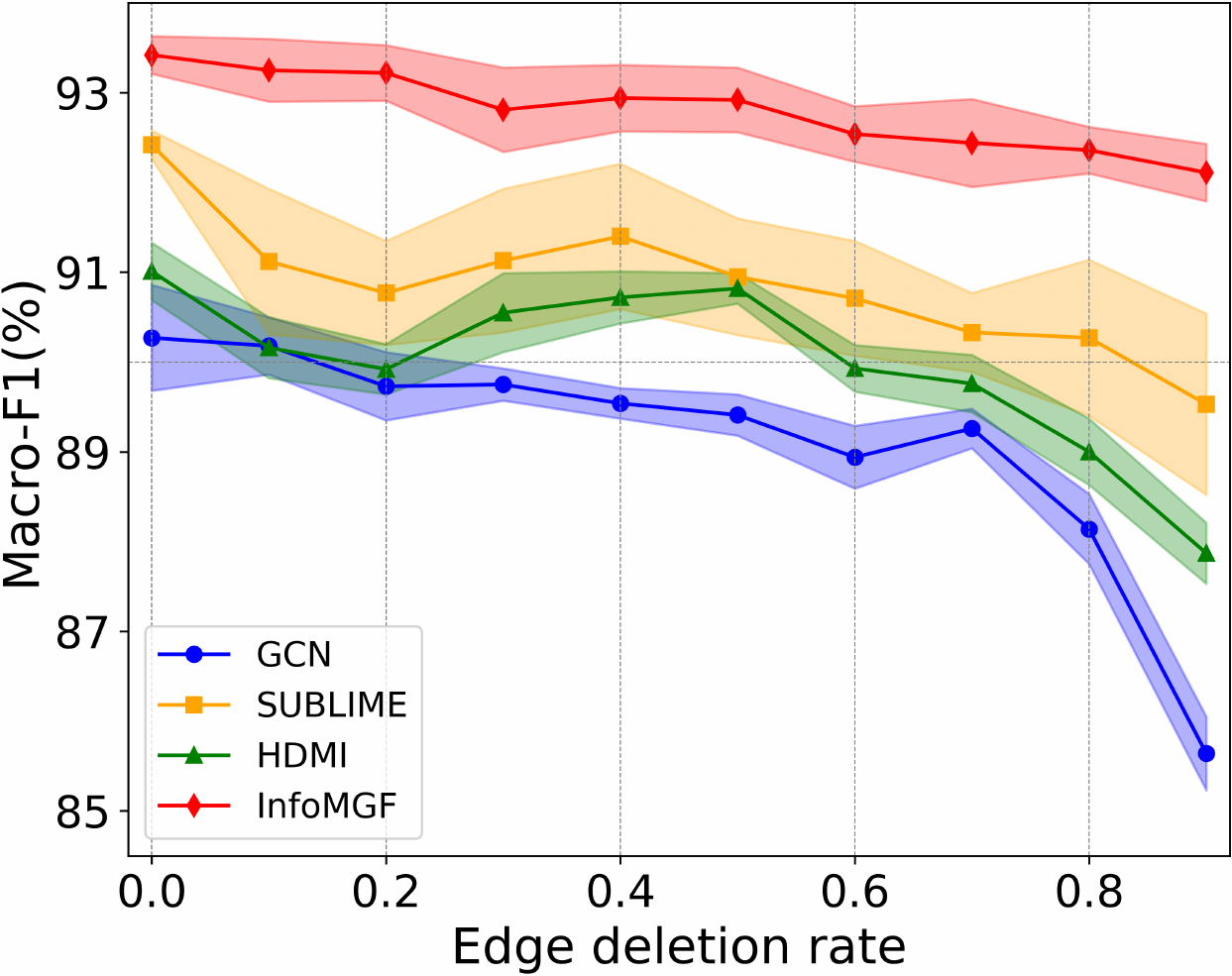}
            \caption{Deleting edges}
        \end{subfigure}
    \end{minipage}
    \caption{Robustness analysis on ACM.}
    \label{robustness}
\end{wrapfigure}

\subsection{Robustness Analysis (RQ2)}

To evaluate the robustness of InfoMGF against structure noise, we perturb each graph on the ACM dataset by randomly adding or removing edges. We compare InfoMGF against various baselines: structure-fixed method (GCN), GSL method (SUBLIME), and UMGL method (HDMI). 
From Figure \ref{robustness}, it is evident that with increasing rates of edge perturbing, the performance of each method deteriorates, while the GSL methods (i.e., InfoMGF and SUBLIME) exhibit better robustness. 
Notably, our proposed InfoMGF \textbf{consistently outperforms all other methods} across both experimental settings, especially when the perturbation rate is extremely high.

\subsection{Ablation Study (RQ3)}

\begin{table*}[h]
    \centering
    \begin{minipage}{0.75\linewidth} 
        \centering
        \captionsetup{type=table}
        \caption{Performance ($\%\pm\sigma$) of InfoMGF and its variants.}
        \resizebox{\linewidth}{!}{
            \begin{tabular}{c|cc|cc|cc}
                \toprule
                \multirow{2}{*}{Variants}       & \multicolumn{2}{c|}{ACM} & \multicolumn{2}{c|}{DBLP} & \multicolumn{2}{c}{Yelp} \\    & Macro-F1    & Micro-F1   & Macro-F1     & Micro-F1    & Macro-F1        & Micro-F1       \\
                \midrule
                w/o $\mathcal{L}_s$                           & 93.05$\pm$0.49 & 92.98$\pm$0.49 & 90.44$\pm$0.45  & 91.39$\pm$0.41 & 93.15$\pm$0.12  & 92.11$\pm$0.13 \\
                w/o $\mathcal{L}_u$                          & 92.66$\pm$0.53 & 92.61$\pm$0.51 & 90.13$\pm$0.43  & 91.05$\pm$0.44 & 92.23$\pm$0.27  & 90.96$\pm$0.36 \\
                \midrule
                w/o Aug. & 92.84$\pm$0.17 & 92.81$\pm$0.16 & 90.94$\pm$0.45  & 91.81$\pm$0.41 & 92.76$\pm$0.49  & 91.63$\pm$0.51 \\
                w/o Rec.                         & 92.91$\pm$0.53 & 92.88$\pm$0.51 & 91.05$\pm$0.27  & 91.87$\pm$0.23 & 92.65$\pm$0.27  & 91.45$\pm$0.37 \\
                \midrule
                InfoMGF                         & 93.42$\pm$0.21 & 93.35$\pm$0.21 & 91.28$\pm$0.31  & 92.12$\pm$0.28 & 93.26$\pm$0.26  & 92.24$\pm$0.34 \\
                \bottomrule
            \end{tabular}
        }
        \label{ablation}
    \end{minipage}
\end{table*}

To verify the effectiveness of each part of InfoMGF, we design four variants and compare the classification performance against InfoMGF.

\textit{Effectiveness of loss components.} Recall InfoMGF maximizes view-shared and unique task-relevant information by $\mathcal{L}_s$ and $\mathcal{L}_u$. Thus, we design two variants (w/o $\mathcal{L}_s$ and w/o $\mathcal{L}_u$). Table \ref{ablation} shows the necessity of each component. 
Furthermore, we can observe that removing $\mathcal{L}_u$ has a greater impact compared to $\mathcal{L}_s$, which can be explained by the fact that optimization of $\mathcal{L}_u$ actually maximizes the overall task-relevant information of each view, rather than solely view-unique aspects.

\textit{Effectiveness of augmentation module.} The InfoMGF-LA framework incorporates learnable generative augmentation and maximizes the mutual information $I(G_{i}^{s}; G_{i}^\prime)$ to mine the task-relevant information. We first compare InfoMGF with maximizing the mutual information $I(G_{i}^{s}; G_{i})$ with the original graph structure without augmentation (w/o Aug.). Furthermore, we remove the reconstruction loss term (w/o Rec.) of $\mathcal{L}_{gen}$ to analyze the necessity of crucial information preserving. The results show that maximizing $I(G_{i}^{s}; G_{i})$ leads to poorer performance compared to $I(G_{i}^{s}; G_{i}^\prime)$, consistent with Theorem \ref{G_prime_G}. Meanwhile, deleting the reconstruction term from $\mathcal{L}_{gen}$ also results in the augmented graph lacking task-relevant information, thus hurting model performance. 

\section{Conclusion and Limitation}\label{CAL}

This paper delves into the unsupervised graph structure learning within multiplex graphs for the first time. The proposed InfoMGF refines the graph structure to eliminate task-irrelevant noise, while simultaneously maximizing both the shared and unique task-relevant information across different graphs. The fused graph applied to downstream tasks is optimized to incorporate clean and comprehensive task-relevant information from all graphs. Theoretical analyses and extensive experiments ensure the effectiveness of InfoMGF. A limitation of our research lies in its focus solely on the pure unsupervised scenario. In some real-world scenarios where partial node labels are available, label information can be used to learn a better structure of multiplex graphs. Such supervised or semi-supervised problems are left for future exploration.

\section{Acknowledgments and Disclosure of Funding}
This work was supported by the National Natural Science
Foundation of China (No. 62276053).

\newpage
\bibliographystyle{unsrt}
\normalem

\newpage
\appendix

\section{Notations}

\begin{table}[h]
\centering
\begin{minipage}{0.75\linewidth} 
\caption{Frequently used notations.}
\centering
\resizebox{1.0\linewidth}{!}
{
\begin{tabular}{l|l}
\toprule
\textbf{Notation }                                   & \textbf{Description}                                     \\
\midrule
$G_v=\{ A_v,X \}$                           & The $v$-th original graph.                      \\
$Y$ & The label information. \\
$V, N, d_f$                                         & The number of graphs/nodes/features.                            \\
$A_v \in \{0,1\}^{N \times N}$              & The adjacency matrix of $v$-th original graph. \\
$X \in \mathbb{R}^{N \times d_f}$          & The shared feature matrix across all graphs.                      \\
$G_v^{\prime}=\{ A_v^{\prime},X^{\prime}\}$ & The $v$-th augmented graph.                     \\ 
$G_v^s=\{ A_v^s,X\}$                        & The $v$-th refined graph.                       \\
$G^s=\{ A_s,X\}$                        & The learned fused graph.                       \\
\midrule

$H^v\in \mathbb{R}^{N \times d_f}$ & The node embeddings of the original graph from the graph learner. \\
$Z^{v} \in \mathbb{R}^{N \times d}$ & The node representations of the refined graph of the GCN encoder. \\
$Z \in \mathbb{R}^{N \times d}$ & The node representations of the fused graph from the GCN encoder. \\
$\vec{m} \in \{0,1\}^{d_f}$ & The random masking vector for feature masking. \\
$M \in \{0,1\}^{N \times N}$ & The random masking matrix for edge dropping. \\
$r$ & The order of graph aggregation in SGC. \\
$L$ & The number of layers in GCN. \\
$k$ & The number of neighbors in $k$NN. \\
$\lambda$ & The positive hyper-parameter in $\mathcal{L}_{gen}$. \\
\midrule

$I(G_i^s;G_j^s)$ & The mutual information between the $i$-th and $j$-th refined graphs. \\
$\mathcal{L}$ & The total loss of InfoMGF-RA and InfoMGF-LA. \\
$\mathcal{L}_{gen}$ & The loss of augmented graph generator in InfoMGF-LA. \\
\midrule

$\odot $ & The Hadamard product. \\
$\sigma(\cdot)$ & The non-linear activation function. \\
$Bern(\cdot)$ & The Bernoulli distribution. \\
$[\cdot;\cdot]$ & The concatenation operation. \\
\bottomrule
\end{tabular}
}
\label{Notation_table}
\end{minipage}
\end{table}

\section{Related Work}\label{related_work}
\textbf{Unsupervised Multiplex Graph Learning (UMGL).} Unlike supervised methods such as HAN \cite{wang2019heterogeneous} and SSAMN \cite{sadikaj2023semi} which rely on label information, UMGL tackles unsupervised tasks in multiplex graphs by using node features and graph structures \cite{pan2023high}. Early UMGL methods such as MvAGC \cite{lin2023multi} and MCGC \cite{pan2021multi} combine graph filtering with unsupervised techniques such as spectral and subspace clustering to uncover underlying patterns in complex networks. With the rise of deep representation learning \cite{khoshraftar2024survey}, UMGL has embraced a new paradigm: Unsupervised learning of low-dimensional node representations using graph neural networks (GNN) \cite{wu2020comprehensive} and self-supervised techniques \cite{liu2021self} for downstream tasks such as node classification, node clustering, and similarity search. O2MAC \cite{fan2020one2multi} pioneered the use of GNNs in UMGL, selecting the most informative graph and reconstructing all graph structures to capture shared information. DMGI \cite{park2020unsupervised} and HDMI \cite{jing2021hdmi} maximize mutual information between local and global contexts, then fuse representations from different relations. MGCCN \cite{liu2022multilayer}, MGDCR \cite{mo2023multiplex}, and BTGF \cite{qian2024upper} employ various contrastive losses to align representations of diverse relations and prevent dimension collapse. CoCoMG \cite{peng2023unsupervised} and DMG \cite{mo2023disentangled} capture complete information by learning consistency and complementarity between graphs. Despite these advances, a critical factor that limits the performance of UMGL is overlooked: the reliability of graph structures, which is the focus of our research.

\textbf{Graph Structure Learning (GSL).} With the advancement of graph neural networks, instead of designing complex neural architectures as model-centric approaches, some data-centric research has focused on the graph data itself \cite{yang2023data}, with graph structure learning (GSL) gaining widespread attention for studying the reliability of graph structures. GSL, based on empirical analysis of graph data, recognizes that real-world graph structures are often unreliable, thus opting to learn new structures. GSLB \cite{li2024gslb} summarizes the general framework of graph structure learning: a Graph Learner takes in the original graph $G=\{A, X\}$ and generates a refined graph $G^s=\{A^s, X\}$; then, a Graph Encoder uses the refined graph to obtain node representations or perform class prediction. Consequently, GSL can be broadly categorized into supervised and unsupervised methods based on whether label information is utilized to learn the new structure. For supervised GSL, probabilistic models like LDS \cite{franceschi2019learning} and GEN \cite{wang2021graph} are employed to generate graph structures; GRCN \cite{yu2021graph}, IDGL \cite{chen2020iterative}, and NodeFormer \cite{wu2022nodeformer} calculate node similarities through metric learning or scalable attention mechanisms; while ProGNN \cite{jin2020graph} directly treats all elements in the adjacency matrix as learnable parameters. Meanwhile, methods like SUBLIME \cite{liu2022towards}, STABLE \cite{li2022reliable}, and GSR \cite{zhao2023self} introduce self-supervised signals through contrastive learning to learn graph structures without requiring label information. Almost all existing GSL studies concentrate on a single homogeneous graph, with only a handful of works such as GTN \cite{yun2019graph} and HGSL \cite{zhao2021heterogeneous} attempting supervised structure learning on heterogeneous graphs containing multiple types of nodes. There is still a lack of research concerning more practically significant unsupervised graph structure learning within multiplex graphs.

\textbf{Contrastive Learning and Information Theory.} Contrastive learning, as an effective paradigm of self-supervised learning, enables representation learning without labeled information \cite{jaiswal2020survey}. It has found widespread applications across various modalities \cite{chen2021exploring,gao2021simcse,schneider2019wav2vec}, particularly effective in multi-view or multi-modal tasks \cite{akbari2021vatt,radford2021learning,lin2021completer}. Its theoretical foundation is rooted in multi-view information theory \cite{federici2020learning,tian2020makes}. Standard contrastive learning is based on the assumption of multi-view redundancy: shared information between views is almost exactly what is relevant for downstream tasks \cite{liang2024factorized,tosh2021contrastive,tsai2020self}. They capture shared task-relevant information between views through contrastive pre-training, thus achieving data compression and sufficient representation learning. To successfully apply contrastive learning to multi-modal data with task-relevant unique information, some studies have improved the framework of contrastive learning and extended it to multi-view non-redundancy \cite{liang2024factorized,wang2022rethinking}. Recent efforts also attempt to apply contrastive learning to graph learning tasks \cite{liu2022graph}. They generate contrastive views through graph data augmentation \cite{zhao2021data} or directly utilize different relations within graph data \cite{pan2021multi}. However, existing multi-view graph contrastive learning still suffers from the limitation of multi-view redundancy, failing to extract view-unique task-relevant information effectively.

\section{Algorithm and Methodology Details}\label{appendix_A}

\subsection{Algorithm}\label{appendix_Algorithm}

\begin{algorithm}[H]
  \SetAlgoLined
  \KwIn{Original graph structure $G=\{G_{1},..., G_{V}\}$; Number of nearest neighbors $k$; Random masking probability $\rho$; Number of epochs $E$} 
  \KwOut{Learned fused graph $G^s$ and node representations $Z$} 

  Initialize parameters; \\
  Obtain view-specific node features $\{X^1,\cdots,X^V\}$ by Eq.(\ref{learner});\\
  \For{$e=1,2,3,...,E$}{
  \For{each view $v$ in  $\{1,\cdots, V\}$}{ 
  Generate refined graph $G_v^s=\{A^s_v, X\}$ with graph learner by Eq.(\ref{learner}) and post-processors; \\
    Generate augmented graph $G_v^\prime=\{A^\prime_v, X^\prime\}$ with random feature masking and edge dropping; \\
  }
    Generate fused graph $G^s=\{A^s, X\}$ with graph learner by Eq.(\ref{fused}) and post-processors; \\
    Obtain node representations $\{Z^1, \cdots, Z^V, Z^{1^\prime}, \cdots,Z^{V^\prime}, Z\}$ through graph encoder GCN; \\
    Calculate the total loss $\mathcal{L}$ by Eq.(\ref{sumloss}) and update parameters in GCN and graph learners; \\
  }
  return fused graph $G^s$ and node representations $Z$;
  \caption{The optimization of InfoMGF-RA}
\end{algorithm}

\begin{algorithm}[H]
  \SetAlgoLined
  \KwIn{Original graph structure $G=\{G_{1},..., G_{V}\}$; Number of nearest neighbors $k$; Feature masking probability $\rho$; Hyper-parameter $\lambda$; Number of epochs $E$} 
  \KwOut{Learned fused graph $G^s$ and node representations $Z$} 

  Initialize parameters; \\
  Obtain view-specific node features $\{X^1,\cdots,X^V\}$ by Eq.(\ref{learner});\\
  \For{$e=1,2,3,...,E$}{
  \tcp*[h]{\textcolor{red}{Step 1: Fix augmented graphs $\{G_1^\prime, \cdots, G_V^\prime\}$}} \\
  \For{each view $v$ in  $\{1,\cdots, V\}$}{ 
  Generate refined graph $G_v^s=\{A^s_v, X\}$ with graph learner by Eq.(\ref{learner}) and post-processors; \\
  }
    Generate fused graph $G^s=\{A^s, X\}$ with graph learner by Eq.(\ref{fused}) and post-processors; \\
    Obtain node representations $\{Z^1, \cdots, Z^V, Z^{1^\prime}, \cdots,Z^{V^\prime}, Z\}$ through graph encoder GCN; \\
    Calculate the total loss $\mathcal{L}$ by Eq.(\ref{sumloss}) and update parameters in GCN and graph learners; \\

  \tcp*[h]{\textcolor{red}{Step 2: Fix refined graphs and fused graph $\{G_1^s, \cdots, G_V^s, G^s\}$}} \\
  
  \For{each view $v$ in  $\{1,\cdots, V\}$}{
  Generate augmented graph $G_v^\prime=\{A^\prime_v, X^\prime\}$ with random feature masking and augmented graph generator in Section \ref{aug_LA} \\
  }
  Obtain node representations $\{Z^1, \cdots, Z^V, Z^{1^\prime}, \cdots,Z^{V^\prime}\}$ through graph encoder GCN; \\
  Obtain reconstructed features $\{\hat{X}^{1},\cdots,\hat{X}^{V}\}$ through decoder; \\
  Calculate $\mathcal{L}_{gen}$ by Eq.(\ref{genloss}) and update parameters in augmented graph generator and decoder; \\
  }
  return fused graph $G^s$ and node representations $Z$;
  \caption{The optimization of InfoMGF-LA}
\end{algorithm}

\subsection{Complexity Analysis}

First, we analyze the time complexity of each component in InfoMGF. In this paragraph, let $V$, $N$, and $m$ represent the numbers of graphs, nodes, and edges, while $b_1$ and $b_2$ denote the batch sizes of the locality-sensitive $k$ NN and contrastive loss computation. The layer numbers of graph learner, graph encoder GCN, and non-linear projector are denoted as $L_1$, $L_2$, and $L_3$, respectively. The feature, hidden layer, and representation dimensions are denoted as $d_f$, $d_h$, and $d$, respectively. We analyze the complexity of $k$NN and GCN in scalable versions. Before training, scalable SGC is applied with a complexity of $\mathcal{O}(Vmrd_f)$ related to the aggregation order $r$. During training, we first perform a graph learner with scalable $k$ NN that requires $\mathcal{O}(VNL_1d_f+VNb_1d_f)$. For the GCN encoder and non-linear projector, the total complexity is $\mathcal{O}\left(VmL_2d_h+Vmd+VNL_2d_h^2+VNd_h(d+d_f)+VNL_3d^2\right)$. Within the graph augmentation module, the complexity of feature masking is $\mathcal{O}(Nd_f)$. The learnable generative graph augmentation in InfoMGF-LA has a complexity of $\mathcal{O}(VNd_fd_h+Vmd_h+VNd_fd)$, where the first two terms are contributed by the augmented graph generator and the last one is for the decoder. For InfoMGF-RA, the random edge drop requires $\mathcal{O}(Vm)$ time complexity. For the loss computation, the complexity is $\mathcal{O}(V^2Nb_2d)$.

To simplify the overall complexity, we denote the larger terms within $L_1$, $L_2$, and $L_3$ as $L$, the larger terms between $d_h$ and $d$ as $\hat{d}$, the larger terms between $b_1$ and $b_2$ as $B$. Since the scalable SGC operation only needs to be performed once before training, its impact on training time is negligible. Therefore, we only consider total complexity during the training process. The overall complexity of both InfoMGF-RA and InfoMGF-LA is $\mathcal{O}(VmL\hat{d}+VNL\hat{d}^2+VNd_f(\hat{d}+L)+VNB(d_f+V\hat{d}))$, which is comparable to the mainstream unsupervised GSL models, including our baselines. For example, SUBLIME \cite{liu2022towards} needs to be trained on each graph in a multiplex graph dataset, and its time complexity is $\mathcal{O}(VmL\hat{d}+VNL\hat{d}^2+VNd_f(\hat{d}+L)+VNB(d_f+\hat{d}))$, which only has a slight difference in the last term compared to the time complexity of our method.

\subsection{Details of Post-processing Techniques}

After constructing the cosine similarity matrix of $H^v$, we employ the postprocessor to ensure that $A_v^s$ is sparse, nonnegative, symmetric and normalized. For convenience, we omit the subscript $v$ in the discussion below.

\textbf{$k$NN for sparsity.} The fully connected adjacency matrix usually makes little sense for most applications and results in expensive computation cost. Hence, we conduct the $k$-nearest neighbors ($k$NN) operation to sparsify the learned graph. We keep the edges with top-$k$ values and otherwise to $0$ for each node and get the sparse adjacency matrix $A^{sp}$.

\textbf{Symmetrization and Activation.} As real-world connections are often bidirectional, we make the adjacency matrix symmetric. Additionally, the weight of each edge should be non-negative. With the input $A^{sp}$, they can be expressed as follows:
\begin{equation}
    A^{sym}=\frac{\sigma(A^{sp})+\sigma(A^{sp})^{\top}}{2}
\end{equation}
where $\sigma(\cdot)$ is a non-linear activation implemented by the ReLU function.

\textbf{Normalization.} The normalized adjacency matrix with self-loop can be obtained as follows:
\begin{equation}
    A^{s}=(\tilde{D}^{sym})^{-\frac{1}{2}}\tilde{A}^{sym}(\tilde{D}^{sym})^{-\frac{1}{2}}
\end{equation}
where $\tilde{D}^{sym}$ is the degree matrix of $\tilde{A}^{sym}$ with self-loop. 
Afterward, we can obtain the adjacency matrix $A^s_v$ for each view, which possesses the desirable properties of sparsity, non-negativity, symmetry, and normalization.

\subsection{Details of Loss Functions}

For each view $i$ and $j$, the lower and upper bound of $I(Z^i;Z^j)$ in Eq.(\ref{lower_bound_equ}) and Eq.(\ref{upper_bound_equ}) can be calculated for the node $m$:
\begin{equation}
    \ell_{lb}(Z^{i}_m, Z^{j}_m)=log \frac{e^{sim(\tilde {Z}_m^{i},\tilde Z^{j}_m)/\tau_c }}{\sum_{n=1}^{N} e^{sim(\tilde Z_m^{i},\tilde Z^{j}_n)/\tau_c }}
\end{equation}
\begin{equation}
\ell_{ub}(Z^{i}_m, Z^{j}_m)={{sim(\tilde Z_m^{i},\tilde Z^{j}_m)/\tau_c}}-\frac{1}{N}\sum_{n=1}^{N} {sim(\tilde Z_m^{i},\tilde Z^{j}_n)/\tau_c },
\end{equation}
where $\tilde Z_m^{i}$ is the non-linear projection of $Z_m^{i}$ through MLP, $sim(\cdot)$ refers to the cosine similarity and $\tau_c$ is the temperature parameter in contrastive loss. The loss $\mathcal{L}_s$ is computed as follows: 
\begin{equation}
    \mathcal{L}_s=-\frac{1}{NV(V-1)} \sum_{i=1}^{V}\sum_{j=i+1}^{V} \sum_{m=1}^{N}(\ell_{lb}(Z^{i}_m,Z^{j}_m) +\ell_{lb}(Z^{j}_m,Z^{i}_m)).
\end{equation}
Likewise, we can compute $\mathcal{L}_f$ and $\mathcal{L}_u$ in the total loss $\mathcal{L}$ with the same approach. Upon optimizing $\mathcal{L}$, our objective also entails the minimization of $\mathcal{L}_{gen}$, which incorporates $\lambda*\mathcal{L}_u$ (here we compute $\mathcal{L}_u$ using the upper bound) and the loss term of the reconstruction. 
$\mathcal{L}_{gen}$ can be represented by:
\begin{equation}
\mathcal{L}_{gen}=\lambda*\frac{1}{2NV} \sum_{i=1}^{V} \sum_{j=1}^{N} (\ell_{ub}(Z_j^{i},Z_j^{i^{\prime}})+\ell_{ub}(Z_j^{i^{\prime}},Z_j^{i}))+\frac{1}{N V}\sum\limits_{i=1}^V \sum\limits_{j=1}^{N}\left(1 - \frac{(X_j^{i})^{\top} \hat{X}_{j}^{i}}{\Vert X_{j}^{i}\Vert\cdot \Vert \hat{X}_{j}^{i}\Vert} \right)
\end{equation}

\section{Proofs of Theorems}\label{Theorems_proofs}

\subsection{Properties of multi-view mutual information and representations}

In this section, we enumerate some basic properties of mutual information used to prove the theorems. For any random variables $x, y$ and $z$, we have:

($P_1$) Non-negativity:
\begin{equation}
    I(x;y)\ge 0,I(x;y|z)\ge 0
\end{equation}

($P_2$) Chain rule:
\begin{equation}
    I(x,y;z)=I(y;z)+I(x;z|y)
\end{equation}

($P_3$) Chain rule (Multivariate Mutual Information):
\begin{equation}
    I(x;y;z)=I(y;z)-I(y;z|x)
\end{equation}

We also introduce the property of representation:

\begin{myLem}\label{repres}
    \cite{federici2020learning,achille2018emergence} If $z$ is a representation of $v$, then:
\begin{equation}
    I(z;a|v,b)=0
\end{equation}
for any variable (or groups of variables) $a$ and $b$ in the system. Whenever a random variable $z$ is defined as a representation of $v$, we state that $z$ is conditionally independent of any other variable in the system given $v$. This does not imply that $z$ must be a deterministic function of $v$, but rather that the source of $z$'s stochasticity is independent of the other random variables.
\end{myLem}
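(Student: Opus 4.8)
The plan is to unpack the defining property of a ``representation'' into a single conditional‑independence (Markov) statement and then read off the claimed identity by two applications of the elementary properties $(P_1)$ and $(P_2)$ listed above. Concretely, I would take ``$z$ is a representation of $v$'' to mean that $z$ is produced from $v$ through a channel $p(z\mid v)$ whose stochasticity is drawn independently of all remaining variables; equivalently, writing $W$ for the tuple of all random variables in the system other than $v$ and $z$, the Markov chain $W - v - z$ holds, i.e. $I(z;W\mid v)=0$. This is precisely the formalization of the informal phrasing in the lemma (``$z$ is conditionally independent of any other variable in the system given $v$''), so I would state it explicitly as the working definition at the start of the proof.

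First I would reduce to the case in which $a$ and $b$ are (sub‑tuples of) $W$: if either shares components with $v$ those are already subsumed by the conditioning on $v$, and overlap between $a$ and $b$ is absorbed into the conditioning on $b$, so this is harmless. Since $(a,b)$ is then a part of $W$, the chain rule $(P_2)$ applied conditionally on $v$ gives
\begin{equation}
0 = I(z;W\mid v) = I\big(z;(a,b)\,\big|\,v\big) + I\big(z; W\setminus(a,b)\,\big|\,v,a,b\big),
\end{equation}
and both summands on the right are non‑negative by $(P_1)$, hence $I(z;(a,b)\mid v)=0$. A second application of $(P_2)$, again conditioned on $v$, yields
\begin{equation}
0 = I\big(z;(a,b)\,\big|\,v\big) = I(z;b\mid v) + I(z;a\mid v,b),
\end{equation}
where by $(P_1)$ each term is non‑negative; therefore both vanish, and in particular $I(z;a\mid v,b)=0$, which is the claim (symmetry of mutual information is used only to align $I(z;a\mid\cdot)$ with $I(a;z\mid\cdot)$).

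The main obstacle is not the manipulation — that is the two‑line argument above — but pinning down the definition: one must ensure that ``representation'' encodes the \emph{joint} Markov property $I(z;W\mid v)=0$ rather than merely $I(z;a\mid v)=0$ for each single variable $a$, since the lemma must survive an \emph{arbitrary} additional conditioning variable $b$, and pairwise conditional independence does not upgrade to joint conditional independence in general. Fortunately the generative reading quoted in the lemma itself (``the source of $z$'s stochasticity is independent of the other random variables'') delivers exactly the joint statement $W - v - z$, so once that reading is adopted the rest is routine; I would accordingly devote the opening of the writeup to making this reading precise and treat everything after it as a direct consequence of $(P_1)$ and $(P_2)$.
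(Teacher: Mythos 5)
Your argument is correct, but it is worth noting that the paper itself does not prove this lemma at all: it imports the statement verbatim from the cited works (Federici et al.; Achille and Soatto) and treats it essentially as the \emph{definition} of what it means for $z$ to be a representation of $v$, later invoking it only in the special case where $G_i^s$ is a deterministic function of $G_i$ (for which the property is immediate). Your write-up therefore supplies a derivation the paper leaves implicit, and it does so along the standard route: formalize ``representation'' as the joint Markov property $I(z;W\mid v)=0$ with $W$ the tuple of all remaining variables, then apply the conditional chain rule twice together with non-negativity to peel off first $W\setminus(a,b)$ and then $b$, leaving $I(z;a\mid v,b)=0$. Both chain-rule steps and the reduction absorbing any overlap of $a,b$ with $v$ or with each other into the conditioning are sound. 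The genuinely valuable point you add is the one you flag yourself: the lemma needs the \emph{joint} conditional independence of $z$ from all other variables given $v$, not merely pairwise statements $I(z;a\mid v)=0$, because the conclusion must withstand an arbitrary extra conditioning variable $b$; reading ``the source of $z$'s stochasticity is independent of the other random variables'' as $z=f(v,\epsilon)$ with $\epsilon$ jointly independent of $(v,W)$ delivers exactly that, and also confirms the lemma's remark that determinism of $z$ given $v$ is not required. In short: the paper's approach buys brevity by citation and by only ever needing the deterministic case, while your approach makes the definitional content precise and shows the identity is a two-line consequence of $(P_1)$ and $(P_2)$ once that content is fixed.
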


\subsection{Proof of Proposition \ref{Lb}}\label{proof_lower_bound}

\begin{myProposition}\label{Lb}
    For any view $i$ and $j$, $2I(G^s_{i}; G^s_{j})$ is the lower bound of $I(G^s_{i};G_{j})+I(G^s_{j}; G_{i})$.
\end{myProposition}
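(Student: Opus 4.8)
The plan is to prove the two inequalities $I(G^s_i;G_j)\ge 2I(G^s_i;G^s_j)$-style bounds are \emph{not} quite what is needed; rather, the cleaner route is to show $I(G^s_i;G_j)\ge I(G^s_i;G^s_j)$ and, symmetrically, $I(G^s_j;G_i)\ge I(G^s_j;G^s_i)$, and then observe $I(G^s_i;G^s_j)=I(G^s_j;G^s_i)$, so that adding the two gives $I(G^s_i;G_j)+I(G^s_j;G_i)\ge 2I(G^s_i;G^s_j)$. So the real content is the single-view claim: the mutual information between the refined graph $G^s_i$ and the \emph{original} graph $G_j$ dominates the mutual information between $G^s_i$ and the \emph{refined} graph $G^s_j$.

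The key observation enabling this is that $G^s_j$ is constructed as a (possibly stochastic) function of $G_j$ only --- the graph learner takes $G_j=\{A_j,X\}$ and the view-specific features derived from it and outputs $A^s_j$. Hence $G^s_j$ is a representation of $G_j$ in the sense of Lemma \ref{repres}, which gives $I(G^s_j;\,\cdot\,\mid G_j)=0$ for any other variable in the system, in particular $I(G^s_j;G^s_i\mid G_j)=0$. Now I would apply the chain rule ($P_2$) to expand $I(G^s_i;\,G^s_j,\,G_j)$ in two ways:
\begin{equation}
I(G^s_i;\,G^s_j,G_j)=I(G^s_i;G_j)+I(G^s_i;G^s_j\mid G_j)=I(G^s_i;G^s_j)+I(G^s_i;G_j\mid G^s_j).
\end{equation}
Since $G^s_j$ is a function of $G_j$, the pair $(G^s_j,G_j)$ carries the same information as $G_j$ alone, so $I(G^s_i;G^s_j,G_j)=I(G^s_i;G_j)$, forcing $I(G^s_i;G^s_j\mid G_j)=0$ (consistent with the Lemma) and therefore
\begin{equation}
I(G^s_i;G_j)=I(G^s_i;G^s_j)+I(G^s_i;G_j\mid G^s_j)\ \ge\ I(G^s_i;G^s_j),
\end{equation}
where the inequality uses non-negativity of conditional mutual information ($P_1$). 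Symmetrically, $I(G^s_j;G_i)\ge I(G^s_j;G^s_i)=I(G^s_i;G^s_j)$. Summing the two and dividing appropriately yields $I(G^s_i;G_j)+I(G^s_j;G_i)\ge 2I(G^s_i;G^s_j)$, which is exactly the claim.

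The main obstacle --- and the step to state carefully --- is the justification that $(G^s_j,G_j)$ contains no more information about $G^s_i$ than $G_j$ does, i.e. the data-processing-type step $I(G^s_i;G^s_j,G_j)=I(G^s_i;G_j)$. This rests on $G^s_j$ being conditionally independent of everything else given $G_j$ (Lemma \ref{repres}), which must be invoked explicitly rather than assumed from "functionhood," since the graph learner may inject independent stochasticity (e.g. in the post-processing or $k$NN approximation); the Lemma is precisely designed to cover that case. A minor secondary point is the symmetry $I(G^s_i;G^s_j)=I(G^s_j;G^s_i)$, which is immediate from the symmetry of mutual information and just lets the two bounds combine cleanly. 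No heavy computation is involved; the proof is a short chain-rule argument once the representation property is correctly applied.
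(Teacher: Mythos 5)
Your proposal is correct and follows essentially the same route as the paper's proof: invoke Lemma \ref{repres} to conclude $I(G^s_i;G^s_j\mid G_j)=0$ since $G^s_j$ is a representation of $G_j$, expand $I(G^s_i;G^s_j,G_j)$ via the chain rule to obtain $I(G^s_i;G_j)=I(G^s_i;G^s_j)+I(G^s_i;G_j\mid G^s_j)\ge I(G^s_i;G^s_j)$, and sum with the symmetric bound. The only cosmetic difference is that you expand the joint mutual information in two ways and cancel, whereas the paper subtracts the conditional term directly, which is the same computation.
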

    
\begin{proof}[Proof of Proposition \ref{Lb}:] Due to each $G_{i}^s$ is obtained from $G_{i}$ through a deterministic function, which is independent of other variables. Thus, here $G_{i}^s$ can be regarded as a representation of $G_{i}$. For any two different views $G_{i}$ and $G_{j}$, we have:
\begin{equation}
\begin{aligned}
I(G^s_{i}; G_{j}) & \overset{(P_2)}{=} I(G^s_{i}; G^s_{j},G_{j})-I(G^s_{i}; G^s_{j}|G_{j}) \\
&=^{\ast }I(G^s_{i}; G^s_{j},G_{j}) \\
&=I(G^s_{i}; G^s_{j})+I(G^s_{i}; G_{j}|G^s_{j}) \\
&\ge I(G^s_{i}; G^s_{j})  
\end{aligned}
\end{equation}
where $\ast$ follows from Lemma \ref{repres}. The bound reported in this equation is tight when $I(G^s_{i}; G_{j}|G^s_{j})=0$, this happens whenever $G_j^s$ contains all the information regarding $G_i^s$ (and therefore $G_i$). Symmetrically, we can also prove $I(G^s_{j}; G_{i}) \ge I(G^s_{i}; G^s_{j}) $, then we have
\begin{equation}
    I(G^s_{i}; G_{j})+I(G^s_{j}; G_{i}) \ge 2I(G^s_{i}; G^s_{j})
\end{equation}

Proposition \ref{Lb} holds.
\end{proof}

\subsection{Proof of Theorem \ref{G_prime_Y}}\label{proof_G_prime_Y}

\begin{proof}[Proof of Theorem \ref{G_prime_Y}]
From the definition of optimal augmentation graph, we have
\begin{equation}
    I(G^{\prime}_{i};G_{i})=I(Y;G_{i})
    \label{df}
\end{equation}
Similar to the proof of Proposition \ref{Lb}, as $G_{i}^s$ is regarded as a representation of $G_{i}$, therefore:
\begin{equation}
    I(G^s_{i};Y|G_{i})=0
    \label{rep1}
\end{equation}
\begin{equation}
    I(G^s_{i};G_{i}^{\prime}|G_{i})=0
    \label{rep2}
\end{equation}
Based on Eq.(\ref{df}) and the above two equations, then
\begin{equation}
    \begin{aligned}
I(G_{i}^s;G_{i}^{\prime})&=I(G_{i};G_{i}^s;G_{i}^{\prime})+I(G_{i}^s;G_{i}^{\prime}|G_{i}) \\
&\overset{Eq.(\ref{rep2})}{=}I(G_i;G_i^{\prime})-I(G_i;G_i^{\prime}|G_i^s) \\
&\overset{Eq.(\ref{df})}{=}I(G_i;Y)-I(G_i;Y|G_i^s) \\
&\overset{(P_3)}{=}I(G_i;Y;G_i^{s}) \\
&\overset{Eq.(\ref{rep1})}{=}I(G_i;Y;G_i^{s})+I(G^s_{i};Y|G_{i}) \\
&\overset{(P_3)}{=}I(G_i^s;Y)
    \end{aligned}
    \label{thm1}
\end{equation}
It shows that maximizing $I(G_{i}^s;G_{i}^{\prime})$ and maximizing $I(G_i^s;Y)$ are equivalent. Theorem \ref{G_prime_Y} holds.
\end{proof}

\subsection{Proof of Theorem \ref{G_prime_G}}\label{proof_G_prime_G}
\begin{proof}[Proof of Theorem \ref{G_prime_G}]
    Here we theoretically compare $I(G_i^s;G_i)$ with $I(G^s_i;G_i^{\prime})$.
    
\textit{Discussion 1.} For $I(G_i^s;G_i)$, we have:
\begin{equation}
    \begin{aligned}
I(G_i^s;G_i)&=I(G_i;Y;G_i^s)+I(G_i^s;G_i|Y) \\
&=I(G_i^s;Y)-I(G_i^s;Y|G_i)+I(G_i^s;G_i|Y) \\
&=I(G_i^s;Y)+I(G_i^s;G_i|Y)
    \end{aligned}
\end{equation}

In the process of maximizing $I(G_i^s; G_i)$, not only is task-relevant information (the first term) maximized, but task-irrelevant information (the second term) is also maximized.

\textit{Discussion 2.} For $I(G^s_i; G_i^{\prime})$, based on Theorem \ref{G_prime_Y}, we have:
\begin{equation}
I(G^s_i;G_i^{\prime})=I(G_i^s;Y)
\end{equation}

Obviously, no task-irrelevant information is maximized. Theorem \ref{G_prime_G} holds.
\end{proof}

\subsection{Proof of Theorem \ref{general}}\label{proof_general}
\begin{proof}[Proof of Theorem \ref{general}]

To prove the theorem, we need to use the following three properties of entropy: 

($H_1$) Relationship between the mutual information and entropy:
\begin{equation}
    I(x;y)=H(x)-H(x|y)
\end{equation}

($H_2$) Relationship between the conditional entropy and entropy:
\begin{equation}
    H(x|y)=H(x,y)-H(y)
\end{equation}

($H_3$) Relationship between the conditional mutual information and entropy:
\begin{equation}
    I(x;y|z)=H(x|z)-H(x|y,z)
\end{equation}

By maximizing the mutual information with each refined graph, the optimized fused graph $G^s$ would contain all information from every $G^s_{i}$. For any $G^s_{i}$, we denote $G^s_c$ as the fused graph of all views except view $i$. Thus we have:
\begin{equation}
H(G^s)=H(G^s_{i}|G^s_c)+H(G^s_c|G^s_{i})+I(G^s_{i};G^s_c)
\end{equation}
where $H(G^s_{i}|G^s_c)$ and $H(G^s_c|G^s_{i})$ indicate the specific information of $G^s_c$ and $G^s_{i}$ respectively, and $I(G^s_{i};G^s_c)$ indicates the consistent information between $G^s_c$ and $G^s_{i}$. 

Then we have:
\begin{equation}\label{equal}
\begin{aligned}
H(G^s)& =H(G^s_{i}|G^s_c)+H(G^s_c|G^s_{i})+I(G^s_{i};G^s_c) \\
&\overset{(H_1)}{=} H(G^s_{i}|G^s_c)+H(G^s_c|G^s_{i})+H(G^s_i)-H(G^s_{i}|G^s_c) \\
&\overset{(H_2)}{=} H(G^s_c|G^s_{i})+H(G^s_i,G^s_c)-H(G^s_c|G^s_i) \\
&=H(G^s_i,G^s_c)
\end{aligned}
\end{equation}
Therefore, for any downstream task $Y$, we further have:
\begin{equation}
    H(G^s,Y)=H(G^s_i,G^s_c,Y).
\end{equation}
Based on the properties of mutual information and entropy, we can prove:
\begin{equation}
\begin{aligned}
I(G^s;Y)&=H(G^s)-H(G^s|Y) \\
&=H(G^s)-H(G^s,Y)+H(Y) \\
&\overset{Eq.(\ref{equal})}{=}H(G^s_c,G^s_i)-H(G^s_i,G^s_c,Y)+H(Y) \\
\end{aligned}
\label{fggf}
\end{equation}
Based on the properties of entropy, we have the proofs as follows:
\begin{equation}
    I(G^s_i;Y)=H(G^s_i)-H(G^s_i|Y)
\end{equation}
\begin{equation}
    \begin{aligned}
        I(G^s_c;Y|G^s_i)&=H(G^s_c|G^s_i)-H(G^s_c|G^s_i,Y) \\
        &=H(G^s_i,G^s_c)-H(G^s_i)-H(G^s_c|G^s_i,Y)
    \end{aligned}
\end{equation}
With the equations above, we can obtain
\begin{equation}
    \begin{aligned}
        I(G_i^s;Y)+I(G_c^s;Y|G_i^s)&=H(G_i^s)-H(G^s_i|Y)+H(G_i^s,G_c^s)-H(G^s_i)-H(G^s_c|G^s_i,Y) \\
        &=H(G_i^s,G_c^s)-H(G^s_i|Y)-H(G^s_c|G^s_i,Y) \\
        &=H(G_i^s,G_c^s)-H(G^s_i,Y)+H(Y)-H(G^s_c|G^s_i,Y) \\
        &\overset{(H_2)}{=}H(G_i^s,G_c^s)-H(G^s_i,Y)+H(Y)-H(G^s_i,G_c^s,Y)+H(G^s_i,Y) \\
        &=H(G_i^s,G_c^s)+H(Y)-H(G^s_i,G_c^s,Y)
    \end{aligned}
    \label{ll}
\end{equation}
According to Eq.(\ref{fggf}) and Eq.(\ref{ll}), we have:
\begin{equation}
    I(G^s;Y)=I(G_i^s;Y)+I(G_c^s;Y|G^s_i).
\end{equation}
As $I(G^s_c;Y|G^s_i)\ge 0$ ($P_1$), then we can get
\begin{equation}
    I(G^s;Y)\ge I(G^s_i;Y).
    \label{get}
\end{equation}
Similarly, we can also obtain
\begin{equation}
    I(G^s;Y)\ge I(G^s_c;Y).
\end{equation}
As Eq.(\ref{get}) holds for any $i$, thus 
\begin{equation}
    I(G^s; Y)\geq \max_i I(G_i^s; Y).
\end{equation}
Theorem \ref{general} holds.
\end{proof}

\section{Experimental Settings}\label{settings}

\subsection{Datasets}\label{appendixdatasets}

We consider 4 benchmark datasets in total. The statistics of the datasets are provided in Table \ref{Datasets}. Through the value of ``Unique relevant edge ratio'', we can observe a significant amount of view-unique task-relevant information present in each real-world multiplex graph dataset. It should be noted that MAG is a subset of OGBN-MAG \cite{wang2020microsoft}, consisting of the four largest classes. This dataset was first organized into its current subset version in the following paper \cite{shen2024balanced}.

\begin{table}[h]
\caption{Statistics of datasets.}
\centering
\resizebox{1.0\linewidth}{!}
{
\begin{tabular}{c|c|c|c|c|c|c|c|c|c}
\toprule
Dataset               & Nodes                   & Relation type                                & Edges & \begin{tabular}[c]{@{}c@{}}Unique relevant\\ edge ratio (\%)\end{tabular} & Features              & Classes            & Training               & Validation             & Test                   \\
\midrule
\multirow{2}{*}{ACM}  & \multirow{2}{*}{3,025}   & Paper-Author-Paper (PAP)                     & 26,416 & 38.08   & \multirow{2}{*}{1,902} & \multirow{2}{*}{3} & \multirow{2}{*}{600}   & \multirow{2}{*}{300}   & \multirow{2}{*}{2,125}  \\
       &      & Paper-Subject-Paper (PSP)                    & 2,197,556  & 99.05 &       &      &                        &     &                        \\
       \midrule
\multirow{2}{*}{DBLP} & \multirow{2}{*}{2,957}   & Author-Paper-Author (APA)                    & 2,398 & 0     & \multirow{2}{*}{334}  & \multirow{2}{*}{4} & \multirow{2}{*}{600}   & \multirow{2}{*}{300}   & \multirow{2}{*}{2,057}  \\
  &         & Author-Paper-Conference-Paper-Author (APCPA) & 1,460,724  & 99.82 &          &         &                        &          &            \\
  \midrule
\multirow{3}{*}{Yelp} & \multirow{3}{*}{2,614}   & Business-User-Business (BUB)                 & 525,718 & 83.12  & \multirow{3}{*}{82}   & \multirow{3}{*}{3} & \multirow{3}{*}{300}   & \multirow{3}{*}{300}   & \multirow{3}{*}{2,014}  \\
  &         & Business-Service-Business (BSB)     & 2,475,108  & 97.49 &           &     &       &                        &           \\
    &        & Business-Rating Levels-Business (BLB)        & 1,484,692  & 93.07 &      &       &             &                        &        \\
    \midrule
\multirow{2}{*}{MAG}  & \multirow{2}{*}{113,919} & Paper-Paper (PP)       & 1,806,596 & 64.59  & \multirow{2}{*}{128}  & \multirow{2}{*}{4} & \multirow{2}{*}{40,000} & \multirow{2}{*}{10,000} & \multirow{2}{*}{63,919} \\
    &      & Paper-Author-Paper (PAP)     & 10,067,799  & 93.48 &   &    &       &       &         \\
        \bottomrule
\end{tabular}
}
\label{Datasets}
\end{table}

\subsection{Hyper-parameters Settings and Infrastructure}\label{htp_setting}

\begin{table}[htbp]
  \centering
  \caption{Details of the hyper-parameters settings.}
  \resizebox{0.9\linewidth}{!}
{
    \begin{tabular}{c|ccccccccc|c|ccc}
    \toprule
    \multirow{2}[2]{*}{Dataset} & \multirow{2}[2]{*}{$E$} & \multirow{2}[2]{*}{$lr$} & \multirow{2}[2]{*}{$d_h$ } & \multirow{2}[2]{*}{$d$} & \multirow{2}[2]{*}{$k$} & \multirow{2}[2]{*}{$r$} & \multirow{2}[2]{*}{$L$} & \multirow{2}[2]{*}{$\rho$} & \multirow{2}[2]{*}{$\tau_c$} & \multicolumn{1}{l|}{Random Aug.} & \multicolumn{3}{c}{Generative Aug.} \\
          &       &       &       &       &       &       &       &  &     & $\rho_s$  & $lr_{gen}$ & $\tau$ & $\lambda$ \\
    \midrule
    ACM   & 100   & 0.01  & 128   & 64    & 15    & 2     & 2     & 0.5 & 0.2   & 0.5  & 0.001 & 1     & 0.01 \\
    DBLP  & 100   & 0.01  & 64    & 32    & 10    & 2     & 2     & 0.5  & 0.2  & 0.5  & 0.001 & 1     & 1 \\
    Yelp  & 100   & 0.001  & 128   & 64    & 15    & 2     & 2     & 0.5 & 0.2   & 0.5 & 0.001 & 1     & 1 \\
    MAG   & 200   & 0.005  & 256   & 64    & 15    & 3     & 3     & 0   & 0.2   & 0.5 & -     & -     & - \\
    \bottomrule
    \end{tabular}%
    }
  \label{Tabel_hyp}%
\end{table}%

We implement all experiments on the platform with PyTorch 1.10.1 and DGL 0.9.1 using an Intel(R) Xeon(R) Platinum 8457C 20 vCPU and an L20 48GB GPU. We perform $5$ runs of all experiments and report the average results. In the large MAG data set, InfoMGF-RA takes $80$ minutes to complete $5$ runs, whereas, on other datasets, both versions of InfoMGF require less than $5$ minutes.

Our model is trained with the Adam optimizer, and Table \ref{Tabel_hyp} presents the hyper-parameter settings on all datasets. Here, $E$ represents the number of epochs for training, and $lr$ denotes the learning rate. The hidden-layer dimension $d_h$ and representation dimension $d$ of graph encoder GCN are tuned from $\{32, 64, 128, 256\}$. The number of neighbors $k$ for $k$NN is searched from $\{5, 10, 15, 20, 30\}$. The order of graph aggregation $r$ and the number of layers $L$ in GCN are set to $2$ or $3$, aligning with the common layer count of GNN models \cite{baranwal2022effects}. The probability $\rho$ of random feature masking is set to $0.5$ or $0$, and the temperature parameter $\tau_c$ in contrastive loss is fixed at $0.2$. For InfoMGF-RA using random graph augmentation, the probability $\rho_s$ of random edge dropping is fixed at $0.5$. For InfoMGF-LA with learnable generative graph augmentation, the generator's learning rate $lr_{gen}$ is fixed at $0.001$, the temperature parameter $\tau$ in Gumbel-Max is set to $1$, and the hyper-parameter $\lambda$ controlling the minimization of mutual information is fine-tuned from $\{0.001, 0.01, 0.1, 1, 10\}$. 
For the large dataset MAG, we compute the contrastive loss for estimating mutual information in batches, with a batch size of $2560$.

\begin{figure*}[b]
	\centering
	\begin{subfigure}{0.3\linewidth}
		\centering
		\includegraphics[width=1.0\linewidth]{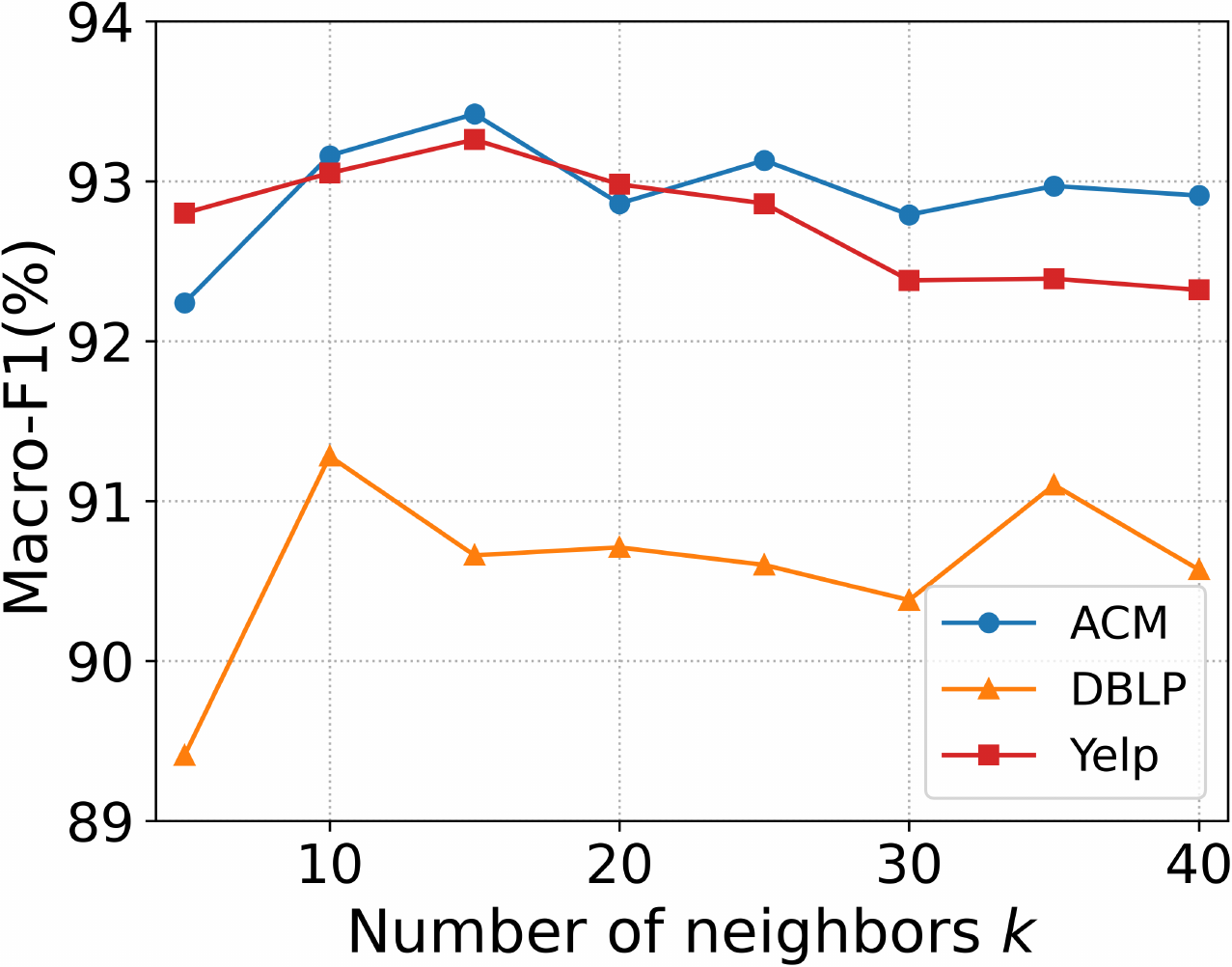}
		\caption{The influence of $k$.}
        \label{senistive_k}
	\end{subfigure}
	\centering
	\begin{subfigure}{0.3\linewidth}
		\centering
		\includegraphics[width=1.0\linewidth]{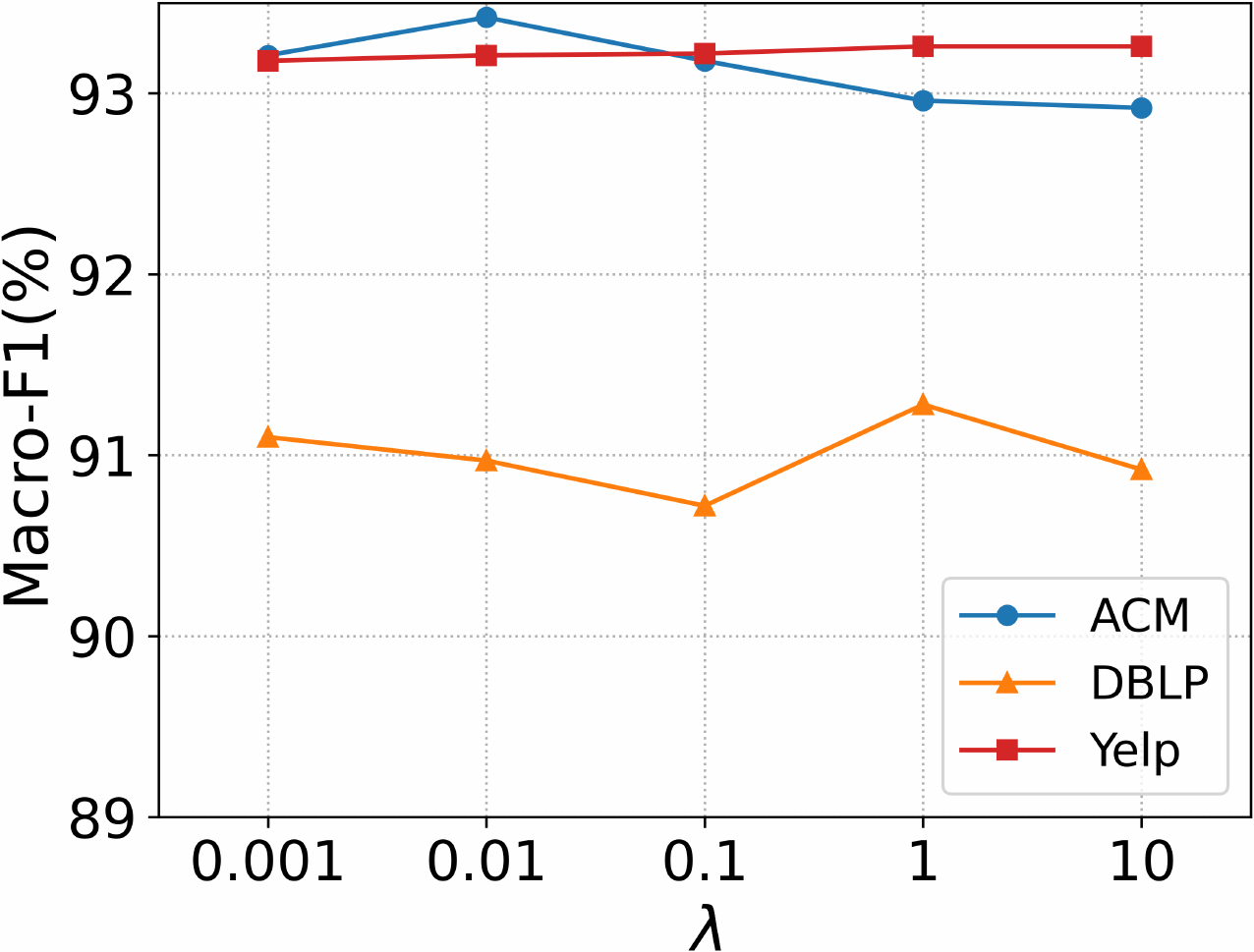}
		\caption{The influence of $\lambda$.}
          \label{senistive_lambda}
	\end{subfigure}
	\centering
	\begin{subfigure}{0.3\linewidth}
		\centering
		\includegraphics[width=1.0\linewidth]{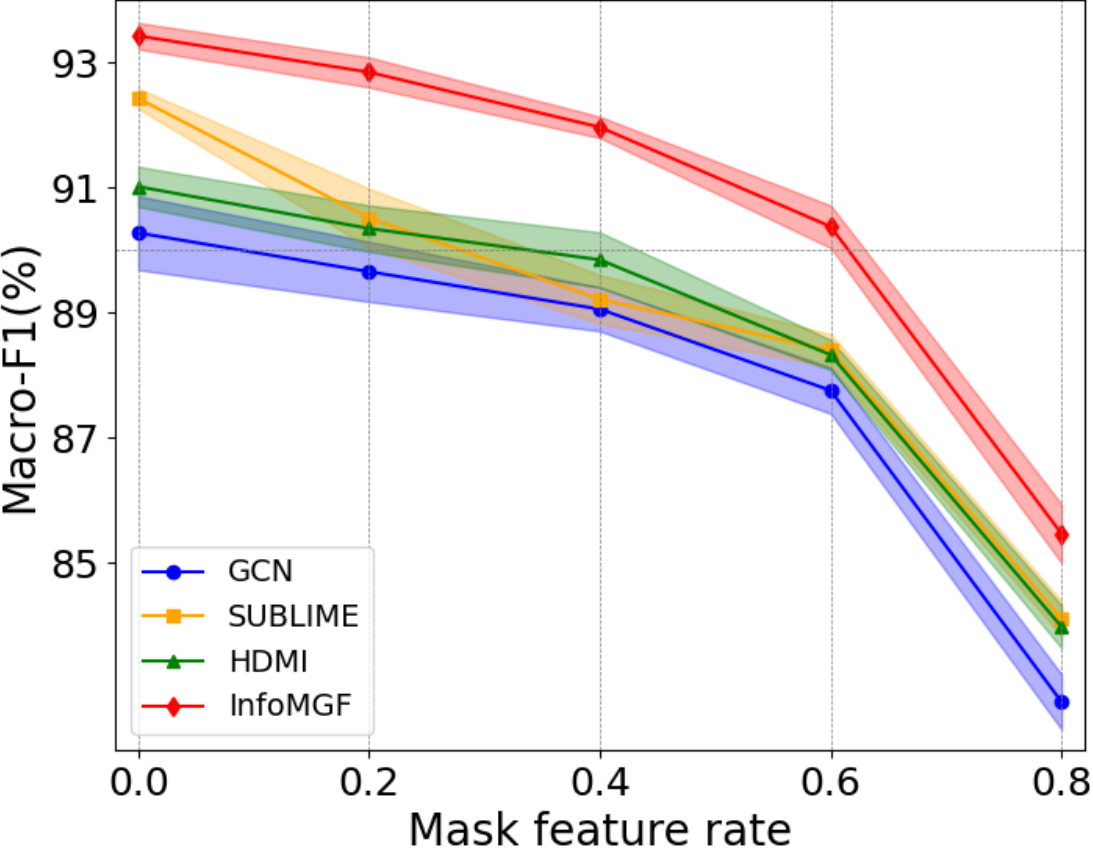}
		\caption{Robustness to feature noise.}
          \label{robustness_features}
	\end{subfigure}
	\caption{Additional experiments on sensitivity and robustness analysis.}
\end{figure*}

\begin{figure*}[t]
	\centering
	\begin{subfigure}{0.24\linewidth}
		\centering
		\includegraphics[width=1.\linewidth]{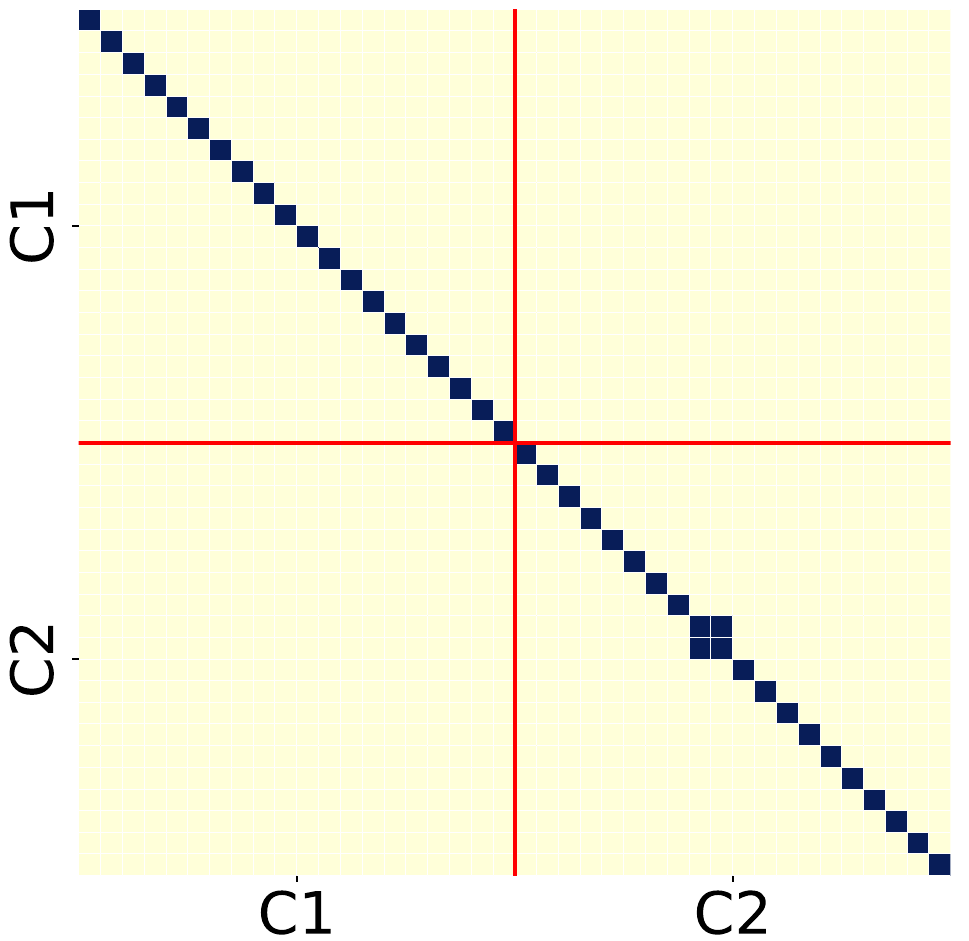}
		\caption{APA}
	\end{subfigure}
     \hspace{0.05\linewidth} 
	\centering
	\begin{subfigure}{0.24\linewidth}
		\centering
		\includegraphics[width=1.\linewidth]{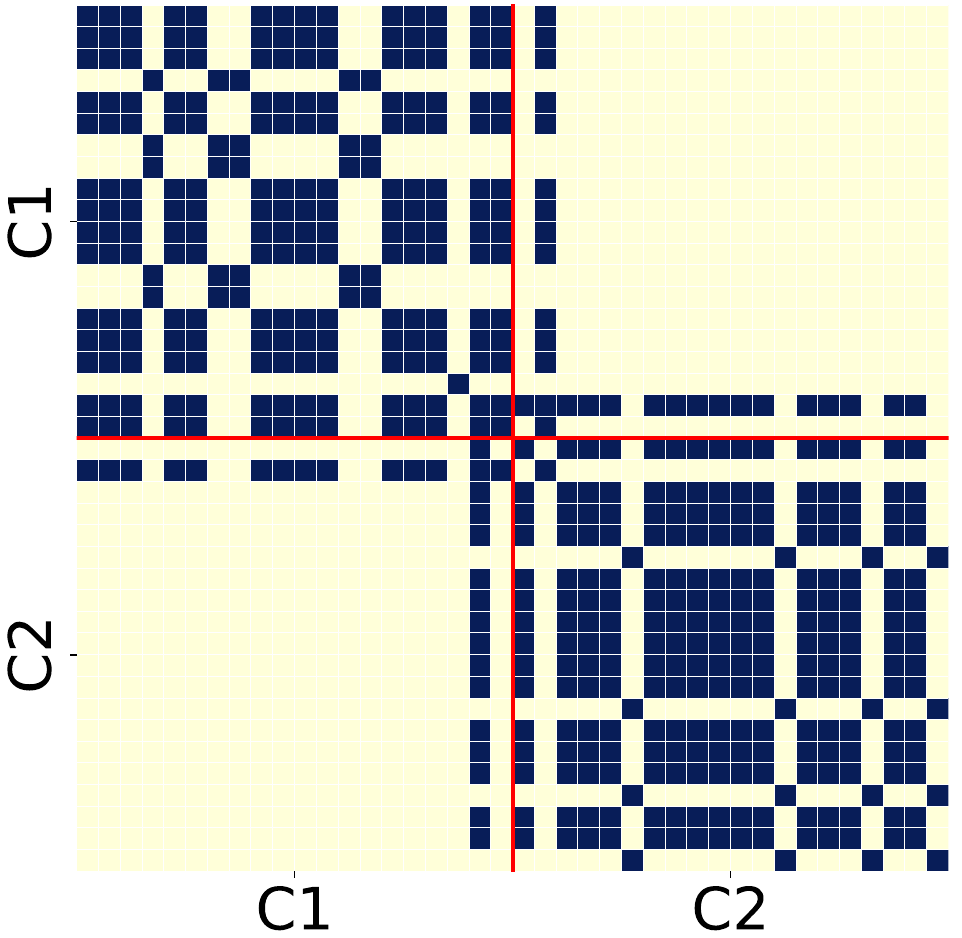}
		\caption{APCPA}
	\end{subfigure}
     \hspace{0.05\linewidth} 
 	\centering
	\begin{subfigure}{0.24\linewidth}
		\centering
		\includegraphics[width=1.\linewidth]{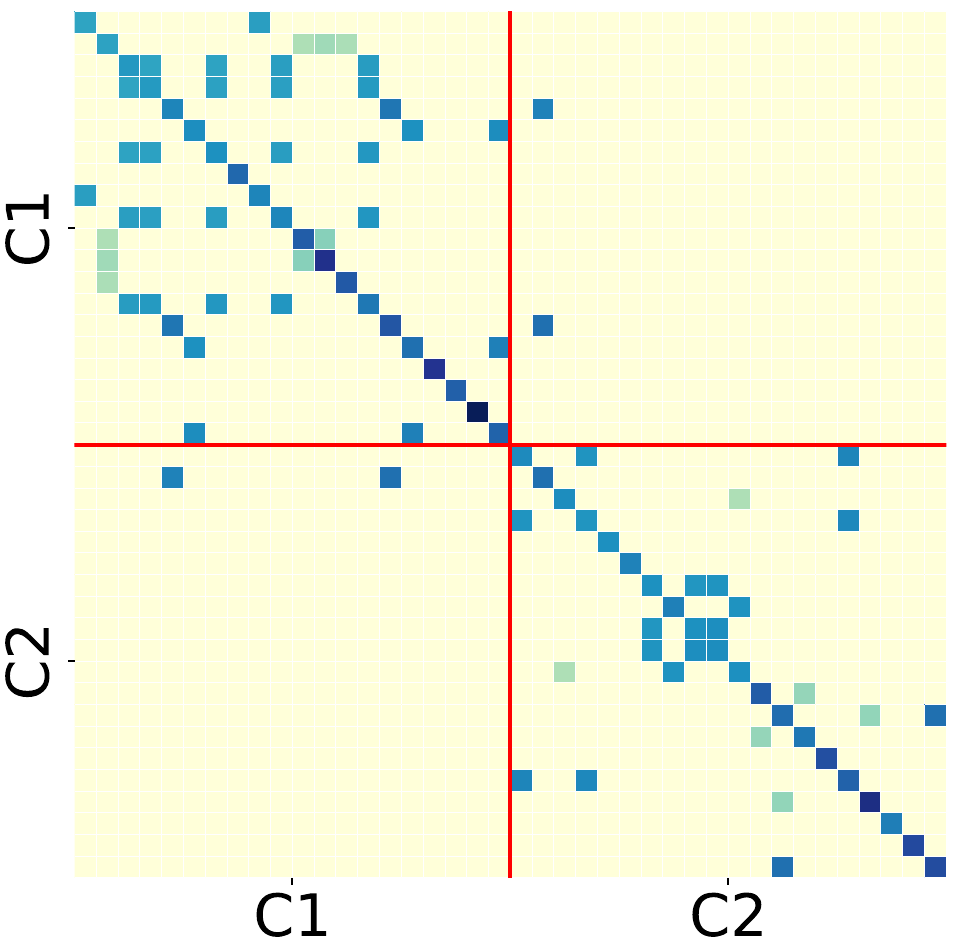}
		\caption{$G^s$}
	\end{subfigure}
	\caption{Heatmaps of the subgraph adjacency matrices of the original and learned graphs on DBLP.}
	\label{Experiment_vis_dblp}
\end{figure*}

\begin{figure*}[t]
	\centering
	\begin{subfigure}{0.22\linewidth}
		\centering
		\includegraphics[width=1.\linewidth]{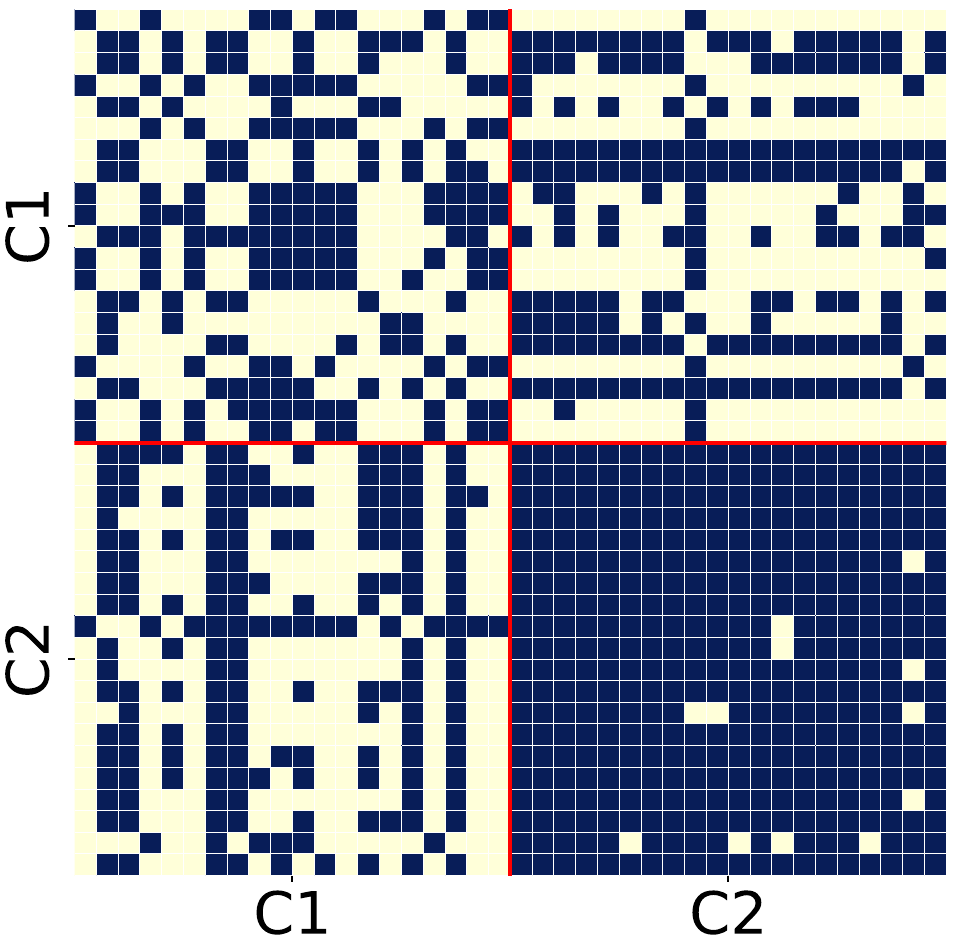}
		\caption{BUB}
	\end{subfigure}
	\centering
	\begin{subfigure}{0.22\linewidth}
		\centering
		\includegraphics[width=1.\linewidth]{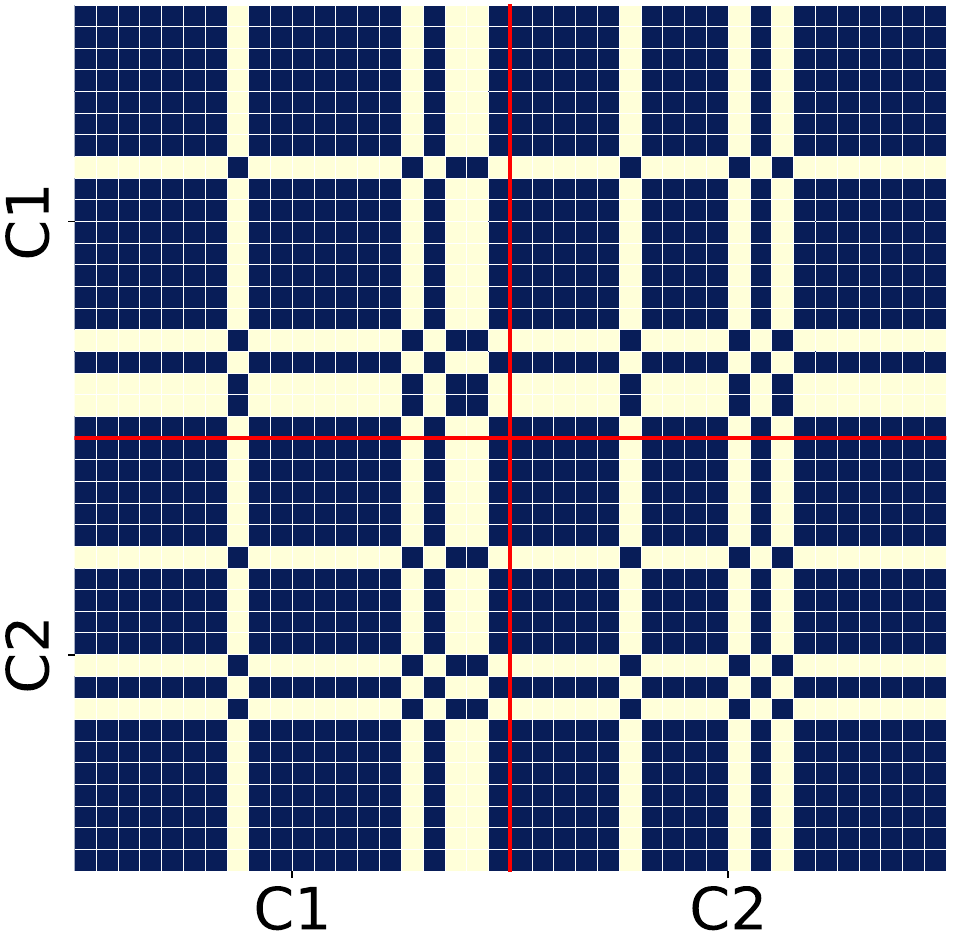}
		\caption{BSB}
	\end{subfigure}
 	\centering
	\begin{subfigure}{0.22\linewidth}
		\centering
		\includegraphics[width=1.\linewidth]{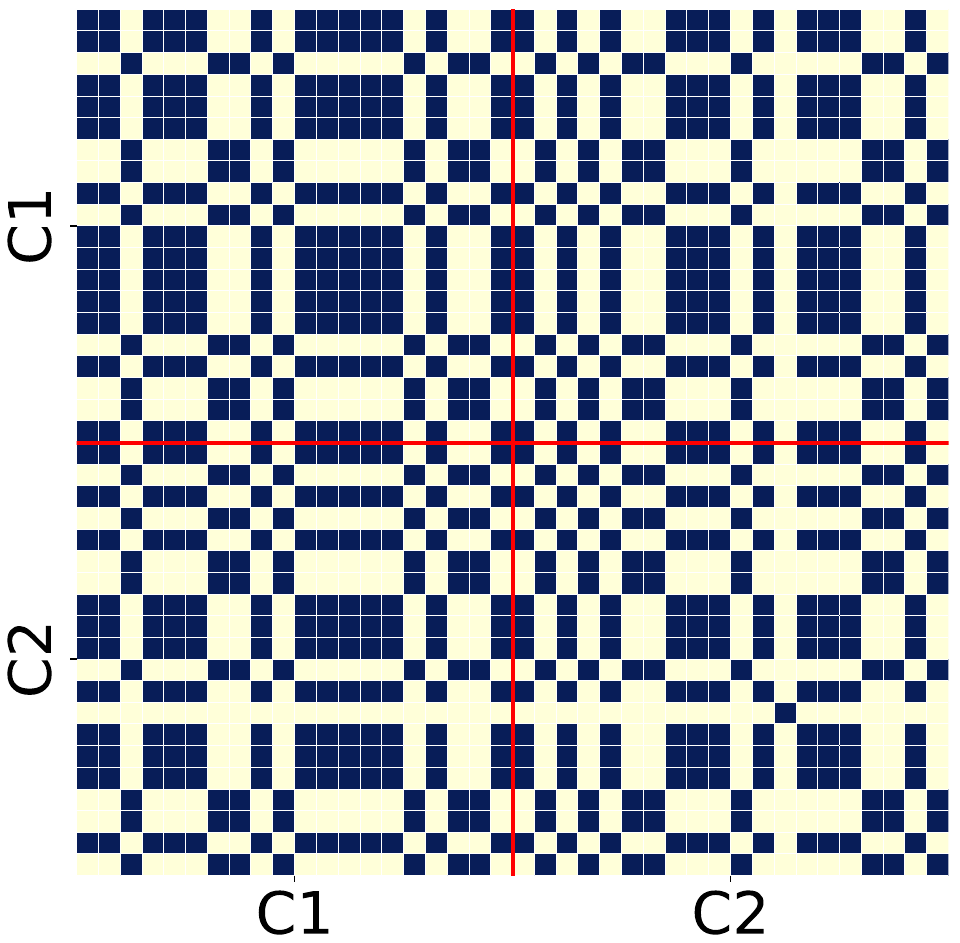}
		\caption{BLB}
	\end{subfigure}
  	\centering
	\begin{subfigure}{0.22\linewidth}
		\centering
		\includegraphics[width=1.\linewidth]{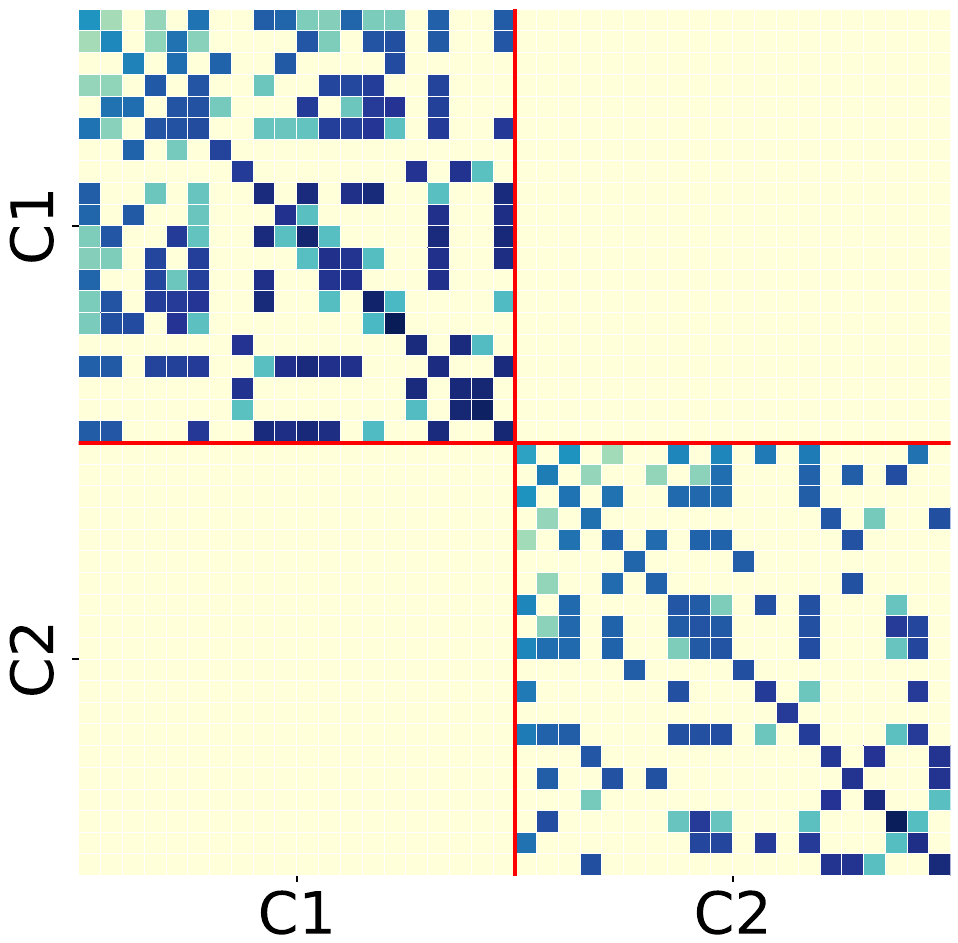}
		\caption{$G^s$}
	\end{subfigure}
	\caption{Heatmaps of the subgraph adjacency matrices of the original and learned graphs on Yelp.}
	\label{Experiment_vis_yelp}
\end{figure*}

\section{Additional Experiments}\label{appendix_experiments}

\subsection{Sensitivity Analysis}

We analyze the impact of two important hyper-parameters: the number of neighbors $k$ in $k$NN and hyper-parameter $\lambda$ controlling the influence of mutual information minimization to generate augmented graphs. The performance change of InfoMGF-LA with respect to $k$ is illustrated in Figure \ref{senistive_k}. Overall, InfoMGF shows low sensitivity to changes in $k$. The model achieves optimal performance when $k$ is set to 10 or 15. However, when $k$ is very small ($k=5$), detrimental effects may arise, possibly due to the limited number of beneficial neighbors. As $k$ increases, the performance can still be maintained high. Figure \ref{senistive_lambda} shows the results to $\lambda$ from $\{0.001, 0.01, 0.1, 1, 10\}$. Our proposed model shows low sensitivity to changes in $\lambda$ in general, while the $\lambda$ corresponding to achieving the best performance varies across different datasets.

\subsection{Robustness}

Figure \ref{robustness_features} shows the performance of InfoMGF and various baselines on the ACM dataset when injecting random feature noise. It can be observed that InfoMGF exhibits excellent robustness against feature noise, while the performance of SUBLIME degrades rapidly. As a single graph structure learning method, SUBLIME’s performance heavily relies on the quality of node features. In contrast, our method can directly optimize task-relevant information in multi-view graph structures (e.g., edges shared across multiple graphs are likely to be shared task-relevant information, which can be directly learned through $\mathcal{L}_s$), thereby reducing dependence on node features. Consequently, InfoMGF demonstrates superior robustness against feature noise.

\subsection{Visualization}

Figures \ref{Experiment_vis_dblp} and \ref{Experiment_vis_yelp}, respectively, present the visualizations of the subgraph adjacency matrices of the original multiplex graphs and the learned fused graph $G^s$ on the DBLP and Yelp datasets. In DBLP, the two categories are machine learning (C1) and information retrieval (C2), while in Yelp, the categories are Mexican flavor (C1) and hamburger type (C2). It can be observed that $G^s$ not only removes the inter-class edges in the original structure but also retains key intra-class edges with weights, not just the shared edges. This further demonstrates the effectiveness of InfoMGF in eliminating task-irrelevant noise while preserving sufficient task-relevant information.

We further visualize the learned node representations $Z$ of the fused graph, which is used in the clustering task. Figure \ref{Experiment_vis_rep_map} shows the node correlation heatmaps of the representations, where both rows and columns are reordered by the node labels. In the heatmap, warmer colors signify a higher correlation between nodes. It is evident that the correlation among nodes of the same class is significantly higher than that of nodes from different classes. This is due to $G^s$ mainly containing intra-class edges without irrelevant inter-class edges, which validates the effectiveness of InfoMGF in unsupervised graph structure learning.

\begin{figure*}[b]
	\centering
	\begin{subfigure}{0.31\linewidth}
		\centering
		\includegraphics[width=1.\linewidth]{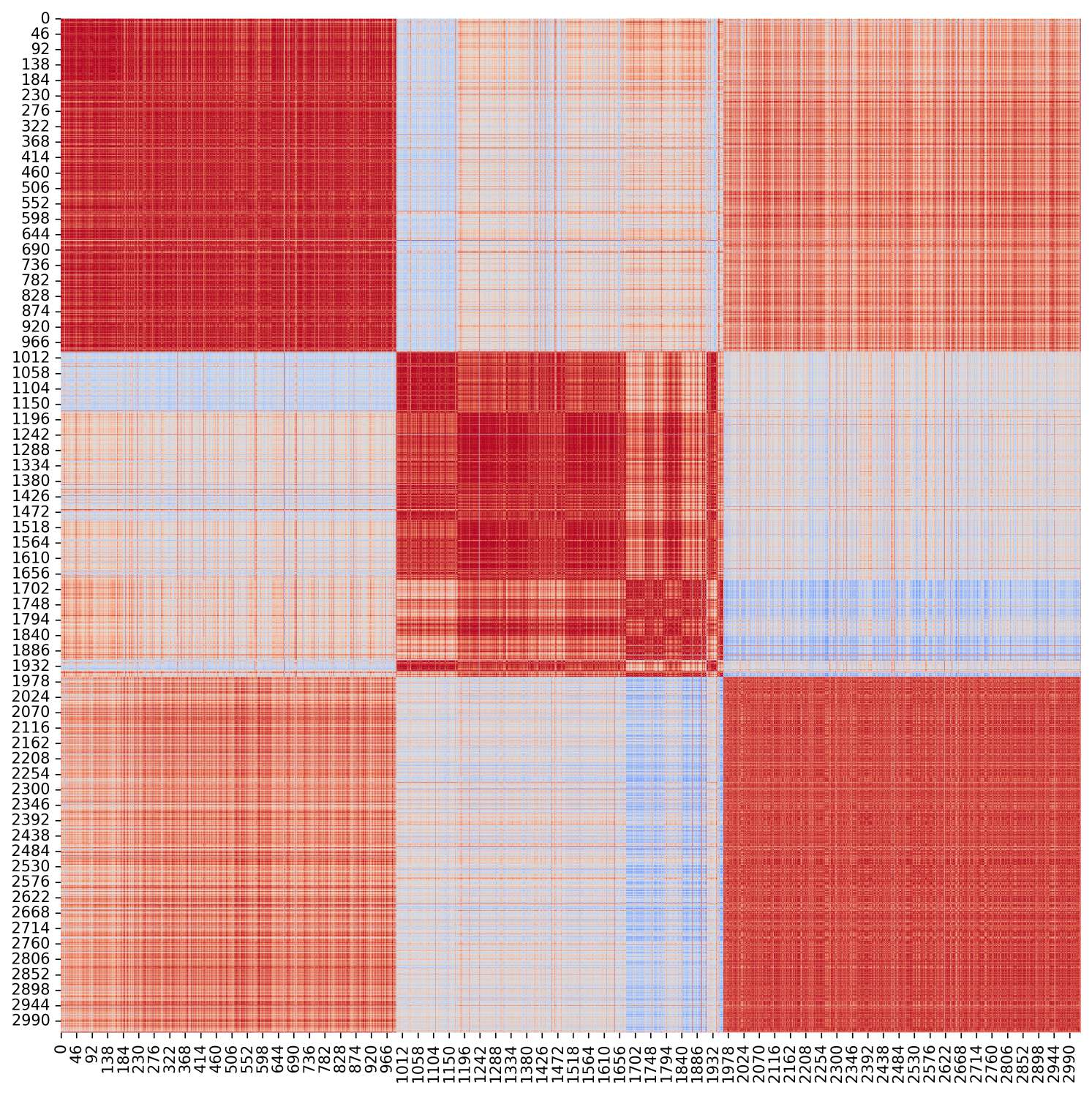}
		\caption{ACM}
	\end{subfigure}
     \hspace{0.02\linewidth} 
	\centering
	\begin{subfigure}{0.31\linewidth}
		\centering
		\includegraphics[width=1.\linewidth]{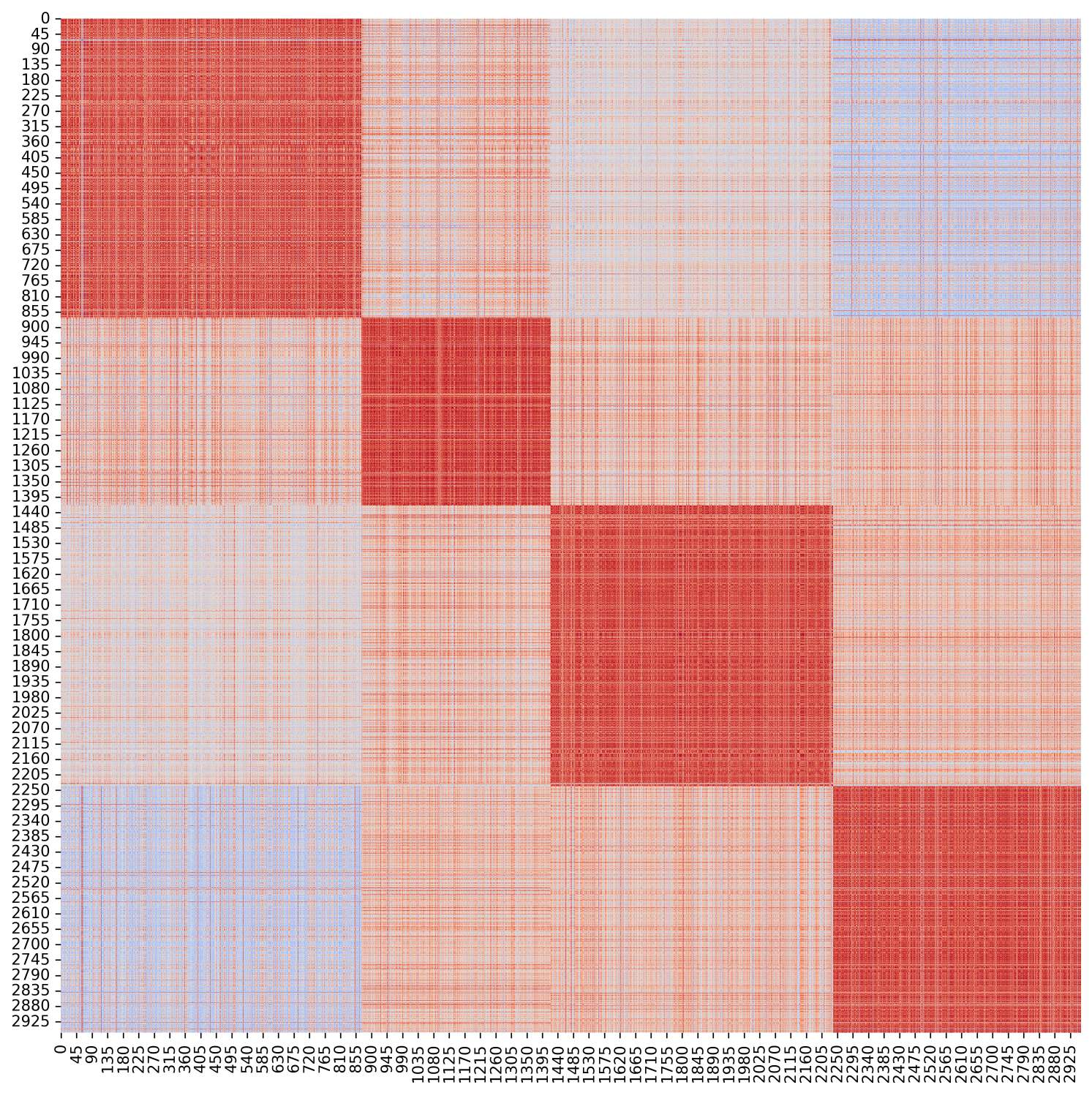}
		\caption{DBLP}
	\end{subfigure}
     \hspace{0.02\linewidth} 
 	\centering
	\begin{subfigure}{0.31\linewidth}
		\centering
		\includegraphics[width=1.\linewidth]{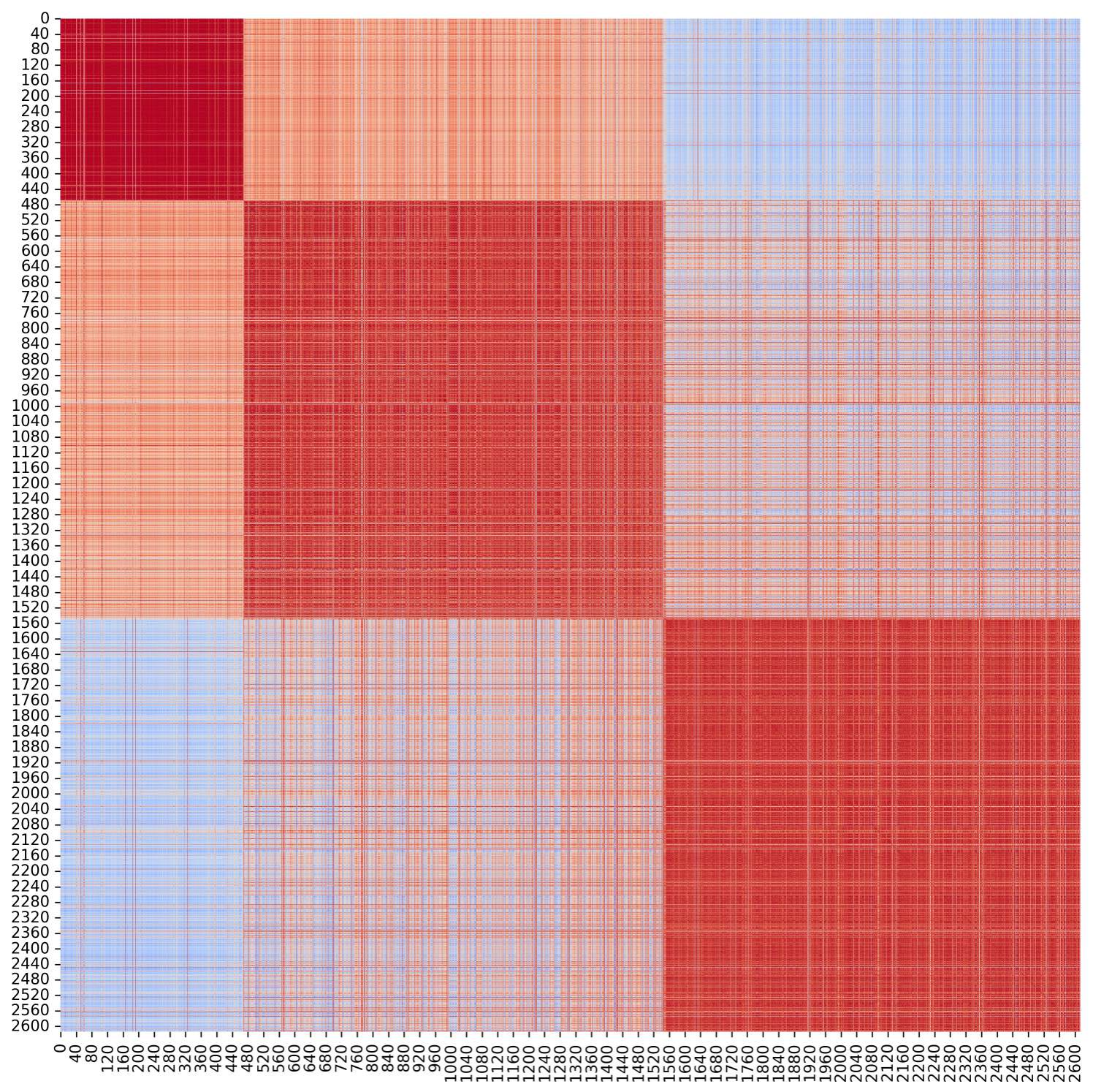}
		\caption{Yelp}
	\end{subfigure}
	\caption{Node correlation maps of representations reordered by node labels.}
	\label{Experiment_vis_rep_map}
\end{figure*}

\end{document}